\DeclarePairedDelimiter{\ceil}{\lceil}{\rceil}
\newtheorem{assumption}{Assumption}
\newtheorem{lemma}{Lemma}
\newtheorem*{lemma*}{Lemma}
\newtheorem{theorem}{Theorem}
\newtheorem*{theorem*}{Theorem}
\theoremstyle{definition}
\newtheorem{definition}{Definition}
\theoremstyle{definition}
\newtheorem{remark}{Remark}
\theoremstyle{definition}
\theoremstyle{definition}
\newtheorem*{claim*}{Claim}
\newcommand{\added}[1]{\textcolor{purple}{#1}}
\begin{document}

\title{Differentially Private Stochastic Linear Bandits:\\ (Almost) for Free} 

\author{Osama A. Hanna\thanks{The first and second authors made equal contribution.}
\and Antonious M. Girgis\footnotemark[1]
\and Christina Fragouli
\and Suhas Diggavi
}
\date{University of California, Los Angeles, USA\\
	Email:\{ohanna, amgirgis, christina.fragouli, suhasdiggavi\}@ucla.edu
} 
\maketitle
\begin{abstract}
In this paper, we propose differentially private algorithms for the problem of stochastic linear bandits in the central, local and shuffled models. In the central model, we achieve almost the same regret as the optimal non-private algorithms, which means we get privacy for free. In particular, we achieve a regret of $\tilde{O}(\sqrt{T}+\frac{1}{\epsilon})$ matching the known lower bound for private  linear bandits, while the best previously known algorithm achieves  $\tilde{O}(\frac{1}{\epsilon}\sqrt{T})$. In the local case, we achieve a regret of $\tilde{O}(\frac{1}{\epsilon}{\sqrt{T}})$ which matches the non-private regret for constant $\epsilon$, but suffers a regret penalty when $\epsilon$ is small. In the shuffled model, we also achieve regret of $\tilde{O}(\sqrt{T}+\frac{1}{\epsilon})$ 
while the best previously known algorithm suffers a regret of $\tilde{O}(\frac{1}{\epsilon}{T^{3/5}})$. Our numerical evaluation validates our theoretical results. 
\end{abstract}

{\allowdisplaybreaks

\section{Introduction}\label{sec:intro}

Stochastic linear bandits offer a sequential decision framework where a learner interacts with an environment over rounds, and decides what is the optimal (from a potentially infinite set) action to play so as to achieve the best possible reward (minimize her regret). In particular, at each round, the learner may  take into account all past rewards and actions to decide the next action to play, and in return receive a new reward.   
This model has been widely adopted  both in theory but also in a number of applications, including recommendation systems, health, online education, and resource allocation~\cite{mary2015bandits, bouneffouf2017bandit, rafferty2018bandit,bouneffouf2019survey}.
Motivated by the fact that many of these applications are privacy-sensitive, in this paper we explore what is the performance in terms of regret we can achieve, if we are constrained to use a privacy-preserving stochastic linear bandit algorithm.  

In particular, in this paper we aim to design algorithms that preserve the privacy of the rewards, from an adversary that can observe all actions that the learner plays. We assume that the learner is connected through a secure communication channel with clients, who play the requested actions. For example, the central learner may make restaurant recommendations to mobile devices, may regulate the operation of on-body sensors in senior living communities, may decide what educational exercises to provide to students, or what jobs to allocate to workers. The actions the clients play - what restaurant is visited, which sensor is activated, what is the exercise solved, what is the job performed - may be naturally visible especially in public environments.  What we care to protect are  the rewards, that may capture private information, such as personal preferences in recommendation systems, health indices in online health, perfomance  in online education, and income gained in resource allocation. Our goal  is to design algorithms that preserve the privacy of the rewards, while still (almost) achieve the same regret as the traditional algorithms that do not take privacy into consideration.

We do so for three different setups,  depicted in Figure~\ref{fig:system_model},  in each case measuring the privacy using 
Differential Privacy (DP)  measures~\cite{Calibrating_DP06,dwork2014algorithmic}. In the {\bf central DP model}, the learner is a trusted server. The server employs a DP mechanism on aggregates of the reward realizations she collects, to ensure that the actions do not reveal information on the rewards. In the {\bf local DP model}, the learner is an untrusted server. The clients provide privatized rewards to the server, who then uses this noisy input to decide her next actions. In the {\bf shuffled model}, the learner is still an untrusted server, but now a trusted node, that can act as a relay in the communication between the clients and the server,  serves as a shuffler, and can randomly permute the privatized rewards before making them available to the server. A shuffler offers a privacy-amplification mechanism that has recently become popular in the literature, as it is easy to implement (simply takes a set of inputs and randomly permutes them), and may enable  better privacy-regret performance~\cite{cheu2019distributed,erlingsson2019amplification,balle2019privacy,feldman2022hiding,girgis2021renyi}.

\begin{figure*}[t!]
	\begin{subfigure}{0.3\linewidth}
		\centerline{\includegraphics[width=4.5cm, scale=0.5]{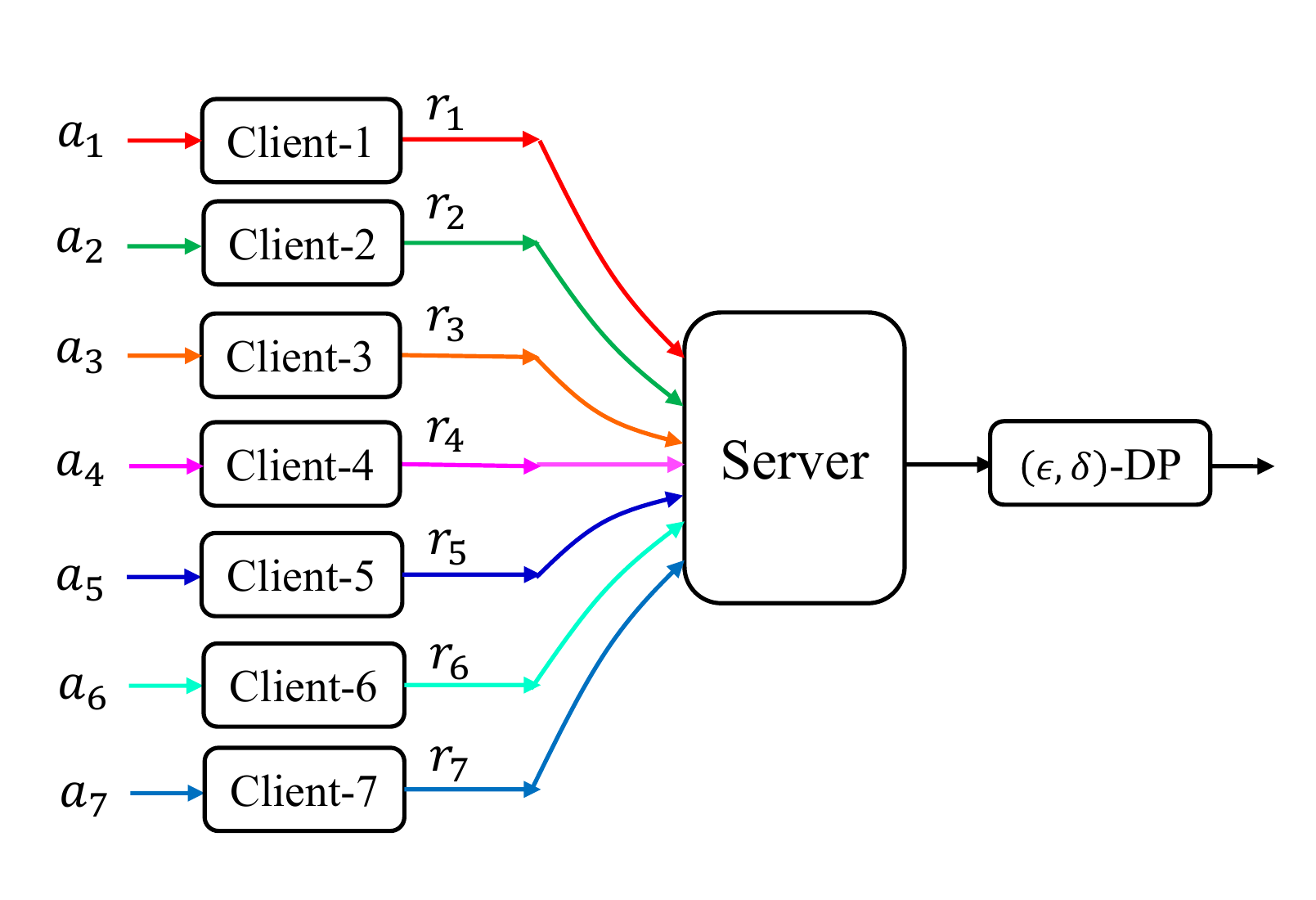}}
		\caption{Central model.}
		~\label{fig:central}
	\end{subfigure}
	\begin{subfigure}{0.3\linewidth}
		\centerline{\includegraphics[width=4.5cm, scale=0.5]{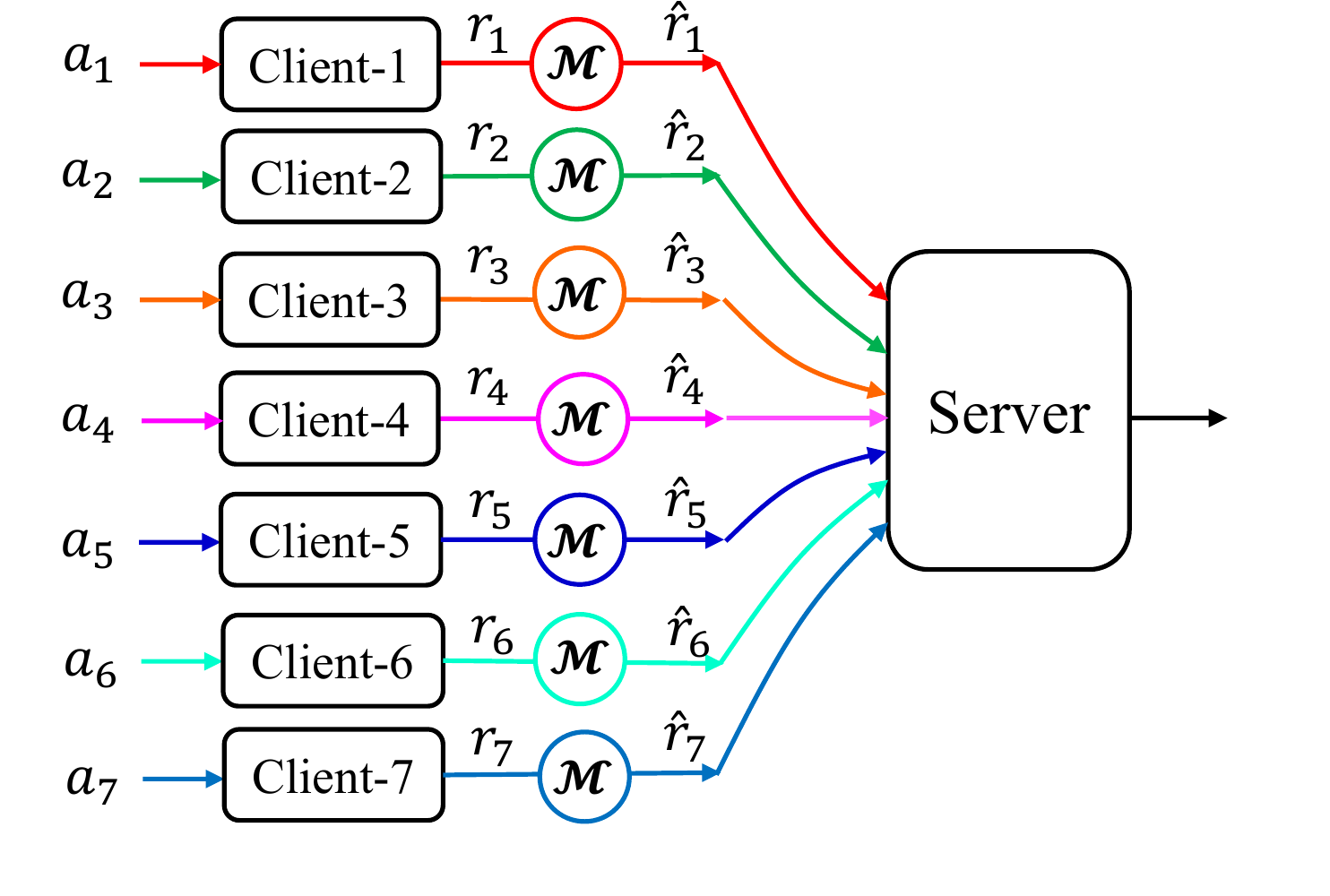}}
		\caption{Local model.}
		~\label{fig:local}
	\end{subfigure}
	\begin{subfigure}{0.3\linewidth}
		\centerline{\includegraphics[width=4.5cm, scale=0.5]{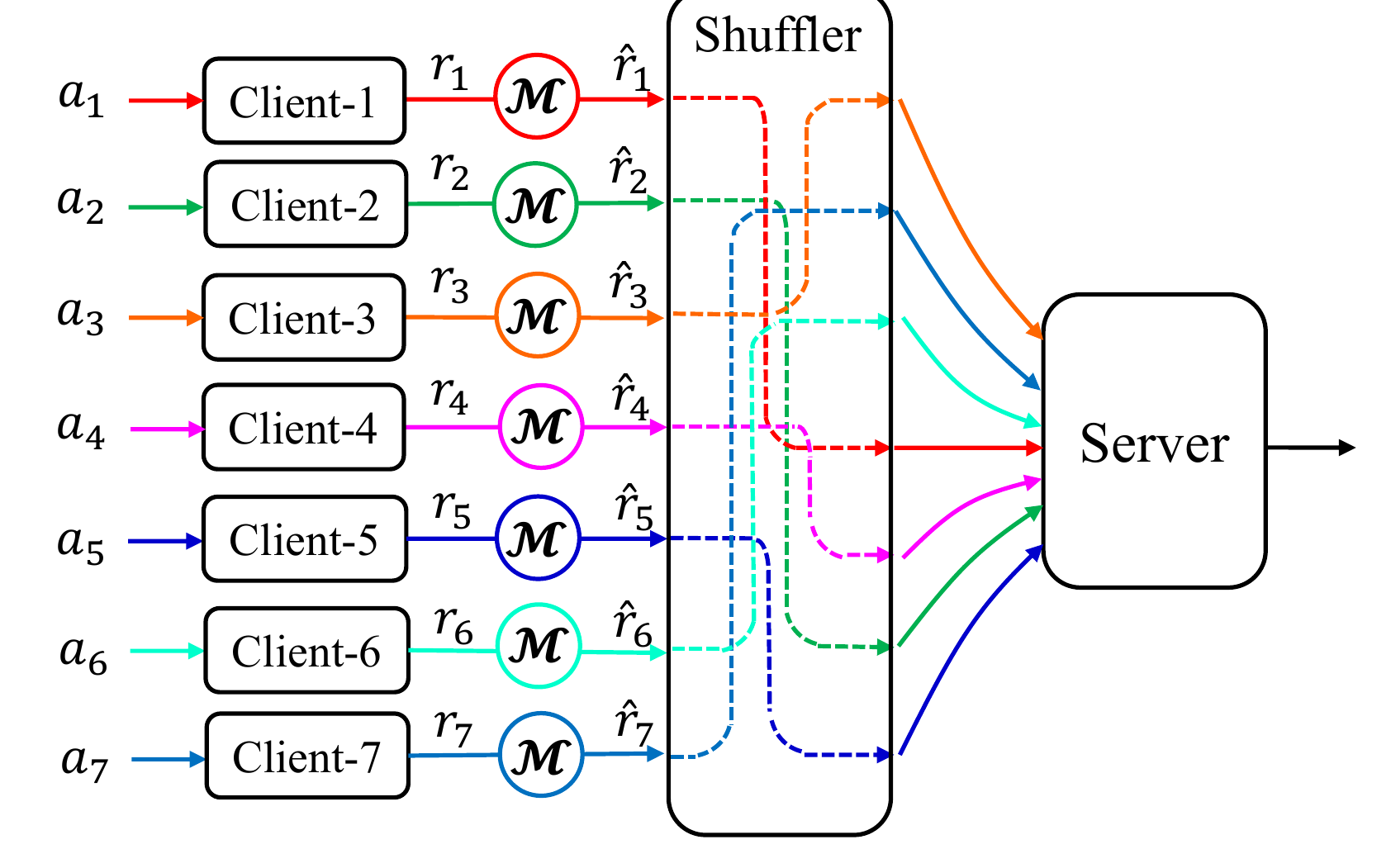}}
		\caption{Shuffled model.}
		~\label{fig:shuffle}
	\end{subfigure}
	\caption{\label{fig:system_model} In case (a) the server is trusted, and we ensure that the publicly observable actions maintain privacy of the rewards. In (b) and (c) we  maintain privacy from an untrusted server.}
	\vspace{-0.5cm}
\end{figure*}

Our main contributions are as follows.\\
$\bullet$ For the {\bf central DP model}, we design an algorithm that guarantees $\epsilon$-DP (see Definition~\ref{def:DP} in Section~\ref{sec:problem_formulation}) and achieves regret that matches existing lower bounds. In particular, over $T$ rounds, it achieves 	regret $R_T= O\left(\sqrt{T\log T}+\frac{\log^2 T}{\epsilon}\right)$ w.h.p., which is  optimal within a $\log T$ factor: a lower bound of $O(\sqrt{T})$ is proven in~\cite{rusmevichientong2010linearly} for non-private linear bandits, while a lower bound of $O(\frac{\log T}{\epsilon})$ is shown in~\cite{shariff2018differentially} for $\epsilon$-DP linear bandits. Note that for $\epsilon \approx 1$ (perhaps the most common case) the dominant term  $O(\sqrt{T\log T})$ matches the regret of the best known algorithms for the non-private case (eg., LinUCB~\cite{abbasi2011improved,rusmevichientong2010linearly}), and hence, we get privacy for free.\\
$\bullet$ For the {\bf local DP model}, we design an algorithm that guarantees $\epsilon_0$-LDP (see Definition~\ref{def:ldp} in Section~\ref{sec:problem_formulation}) and achieves regret $R_T=\mathcal{O}\left(\sqrt{T\log(T)}/\epsilon_0\right)$ w.h.p.; this regret matches the non-private regret for constant $\epsilon_0$, but suffers a regret  penalty when $\epsilon_0$ is small. Although our algorithm  does not improve the regret order as compared to the best-known
algorithm for private (contextual) linear bandits in~\cite{han2021generalized}, it offers an alternative approach that serves as a foundation for the shuffled case.  \\
$\bullet$ For the {\bf shuffled model}, we  leverage the help of a trusted shuffler to ensure both that the output of each client satisfies $\epsilon_0$-LDP and that the output of the secure shuffler satisfies $\epsilon$-DP requirements. Our algorithm achieves regret $R_T=\mathcal{O}\left(\sqrt{T\log(T)}+\frac{\log(T)}{\epsilon}\right)$ w.h.p. that matches the regret of the best non-private algorithms, same as the central model. Furthermore, our algorithm outperforms the best known algorithm for private (contextual) linear bandits in~\cite{garcelon2022privacy,chowdhury2022shuffle} that use shuffling.

Our results are summarized in Table~I, where we also provide known results in the literature (see also discussion next). To the best of our knowledge, in all three cases, our algorithms achieve the best currently known results for private linear bandits, significantly improving from the previously best known results in the case of the central and shuffled model, and closely matching in some cases existing lower bounds.

{\bf Our Work vs. Related Work.}
Differential Privacy (DP) algorithms have been proposed for the generic multi-armed bandits (MAB) problems  \cite{sajed2019optimal, ren2020multi, tenenbaum2021differentially}, yet these algorithms would not work well for linear bandits, as linear bandits allow for an infinite set of actions while generic MAB have a regret that increases with the number of actions.
Closer to ours is work on DP for contextual linear bandits \cite{shariff2018differentially, zheng2020locally, han2021generalized, garcelon2022privacy}; indeed, linear bandits can be viewed as  (a special case of) contextual linear bandit setup with a single context. 
The work in \cite{shariff2018differentially} considers contextual linear bandits with DP in a centralized setting and propose an algorithm that achieves a regret of $\tilde{O}(\sqrt{T}/\epsilon)$. This does not match the best known lower bound for the centralized setting of $\Omega(\sqrt{T}+\log (T)/\epsilon)$ \cite{shariff2018differentially}. Our work achieves the lower bound of $\Omega(\sqrt{T}+\log (T)/\epsilon)$ up to logarithmic factors for the special case of stochastic linear bandits. The work in \cite{zheng2020locally} considers contextual linear bandits with LDP, where the contexts can be adversarial. The work proposes an algorithm that achieves a regret of $\tilde{O}(T^{3/4}/\epsilon_0)$ and conjectures that the regret is optimal up to a logarithmic factor. The authors in \cite{han2021generalized} consider a special case, where the contexts are generated from a distribution, and propose a method that achieves a regret of $\tilde{O}(\sqrt{T}/\epsilon_0)$ under certain assumptions on the context distribution.
Our algorithm for the local model achieves the same regret order using an alternative method.
The works in \cite{garcelon2022privacy, chowdhury2022shuffle} consider contextual linear bandits in the shuffled model where the best known algorithm achieves a regret of $\tilde{O}(T^{3/5})$. Our proposed algorithms achieve a regret of $\tilde{O}(\sqrt{T}+1/\epsilon)$, matching the information theoretic lower bound in \cite{shariff2018differentially}, for stochastic linear bandits in the shuffled model. A summary of the best results for DP  contextual linear bandits and our results is presented in Table~\ref{T1}.

{\bf Paper organization.} We present the problem formulation  in Section~\ref{sec:problem_formulation}. We design and analyze privacy-preserving linear bandit algorithms
for the central model  in Section~\ref{sec:central}, for the local model  in Section~\ref{sec:LDP} and for the shuffled model in Section~\ref{sec:shuffled}. We provide numerical results in Section~\ref{sec:numerics}.

\begin{table}[t!]
	\centering
	\begin{tabular}{ |c|c|c|c|c| }
		\hline
		\multirow{2}{*}{Algorithm} & \multirow{2}{*}{Regret Bound} & \multirow{2}{*}{Context} & \multicolumn{2}{c|}{Privacy Model}   \\ 
		\cline{4-5} & & & Central DP & Local DP\\ 
		\hline\hline
		Central DP~\cite{shariff2018differentially}& $\tilde{\mathcal{O}}\left(\frac{\sqrt{T}}{\epsilon}\right)$ & Adversarial & $\left(\epsilon,\delta\right)$ & N/A \\
		LDP~\cite{zheng2020locally}& $\tilde{\mathcal{O}}\left(\frac{T^{3/4}}{\epsilon_0}\right)$ & Adversarial & $\left(\epsilon=\epsilon_0,\delta\right)$ & $\left(\epsilon_0,\delta\right)$ \\ 
		LDP+shuffling~\cite{garcelon2022privacy}& $\tilde{\mathcal{O}}\left(\frac{T^{2/3}}{\epsilon^{1/3}}\right)$ & Adversarial & $\left(\epsilon,\delta\right)$ & $\left(\epsilon_0=\epsilon^{2/3} T^{1/6},\delta\right)$\\ 
		LDP~\cite{han2021generalized}& $\tilde{\mathcal{O}}\left(\frac{\sqrt{T}}{\epsilon_0}\right)$ & Stochastic & $\left(\epsilon=\epsilon_0,\delta\right)$ & $\left(\epsilon_0,\delta\right)$\\
		\hline\hline
		Central DP (Theorem~\ref{thm:1})& $\tilde{\mathcal{O}}\left(\sqrt{T}+\frac{1}{\epsilon}\right)$ & Free & $\left(\epsilon,0\right)$ & N/A\\
		LDP (Theorem~\ref{thm:2})& $\tilde{\mathcal{O}}\left(\frac{\sqrt{T}}{\epsilon_0}\right)$ & Free & $\left(\epsilon=\epsilon_0,0\right)$ & $\left(\epsilon_0,0\right)$\\
		LDP+shuffling(Theorem~\ref{thm:3})& $\tilde{\mathcal{O}}\left(\sqrt{T}+\frac{1}{\epsilon}\right)$ & Free & $\left(\epsilon,\delta\right)$ & $\left(\epsilon_0=\epsilon T^{1/4},0\right)$\\
		\hline
	\end{tabular}
	\caption{Upper part: known results. Lower part: our results. The $\tilde{\mathcal{O}}$ notation hides the dependencies on the dimension $d$, privacy parameter $\delta$ and $\log$ factors.}~\label{T1}
	\vspace{-.5cm}
\end{table}

\section{Notation and Problem Formulation} \label{sec:problem_formulation}
{\bf Stochastic linear bandits.} In stochastic linear bandits  a learner interacts with clients  over $T$ rounds by taking a sequence of decisions and receiving rewards. In particular, at each round $t\in[T]$, the learner plays an action $a_t$ from a set $\mathcal{A}\subset \mathbb{R}^{d}$ and receives a reward $r_t \in \mathbb{R}$. The reward $r_t$ is a noisy linear function of the action, i.e., $r_t=\langle \theta_{*},a_t\rangle+\eta_t$, where $\langle.\rangle$ denotes inner product, $\eta_t$ is an independent zero-mean noise and $\theta_{*}\in\mathbb{R}^{d}$ is an unknown parameter vector. The goal of the learner is to minimize the total regret over  the $T$ rounds, which  is calculated  as:
\begin{equation}~\label{eqn:regert}
	R_{T}=T\max_{a\in\mathcal{A}}\langle\theta_{*},a\rangle - \sum_{t=1}^{T}\langle \theta_{*},a_t\rangle.
\end{equation} 
The regret captures the difference between the reward for the optimal action and the rewards for the actions chosen by the learner. The basic approach in all algorithms is to play actions that enable the learner to learn $\theta_{*}$ well enough to identify a (near) optimal action.
The best known algorithms (for example, LinUCB~\cite{abbasi2011improved,rusmevichientong2010linearly}) achieve a regret of order $O(\sqrt{T\log T})$, which is the best we can hope for (matches existing lower bounds~\cite{rusmevichientong2010linearly}). 

In this paper, we make the following standard assumptions (see, e.g.,~\cite{abbasi2011improved,shariff2018differentially}).
\begin{assumption} \label{ass:1}
	We consider stochastic linear bandits with:\\
		1. Sub-gaussian noise: $\mathbb{E}[\eta_{t+1}|\mathcal{F}_t]=0$ and $\mathbb{E}[\exp(\lambda \eta_{t+1})|\mathcal{F}_t]\leq \exp(\frac{\lambda^2}{2})\forall \lambda\in \mathbb{R}$, where $\mathcal{F}_t=\sigma(a_1,r_1,...,a_t,r_t)$ is the $\sigma$-field summarizing the information available before round $t$.\\
		2. Bounded actions: $\|a\|_2\leq 1\ \forall a\in \mathcal{A}$.\\
		3. Bounded unknown parameter: $\|\theta_*\|_2\leq 1$.\\
		4. Bounded rewards: $|r_t|\leq 1$.\label{assump:bdd_reward}
\end{assumption}

{\bf Privacy Goal and Measures.}
Our goal in this paper is to achieve the  minimum possible  regret in~\eqref{eqn:regert} while preserving privacy of the rewards $\lbrace r_t\rbrace_{t\in[T]}$ (as discussed in Section~\ref{sec:intro} the rewards can represent sensitive information of the clients). 
To measure privacy, we use the popular central and local differential privacy definitions that we provide for completeness next. For simplicity, we assume that a different client plays each action (e.g., visits a recommended restaurant). 

{\bf Differential Privacy (DP).}
We say that two sequences of rewards $\mathcal{R}=(r_1,\ldots,r_T)$ and $\mathcal{R}'=(r'_1,\ldots,r'_T)$ are neighboring if they  differ in a single reward, i.e., there is a round $t\in[T]$ such that $r_t\neq r'_t$, but $r_j=r'_j$ for all $j\neq t$.
To preserve privacy, we use a randomized mechanism $\mathcal{M}$ designed for stochastic linear bandits, that observes rewards and outputs publicly observable actions.


\begin{definition}\label{def:DP} (\cite{Calibrating_DP06,dwork2014algorithmic}): A randomized mechanism $\mathcal{M}$ for stochastic linear bandits is said to be $\left(\epsilon,\delta\right)$ Differentially Private ($\left(\epsilon,\delta\right)$-DP) if for any two neighboring sequences of rewards $\mathcal{R}=(r_1,\ldots,r_T)$ and $\mathcal{R}'=(r'_1,\ldots,r'_T)$, and any subset of outputs $\mathcal{O}\subset\mathcal{A}^{T}$, $\mathcal{M}$ satisfies:
	\begin{equation}
		\Pr[\mathcal{M}\left(\mathcal{R}\right)\in\mathcal{O}]\leq e^{\epsilon}\Pr[\mathcal{M}\left(\mathcal{R}'\right)\in\mathcal{O}]+\delta.
	\end{equation}
\end{definition} 
When $\delta=0$, we say that the mechanism $\mathcal{M}$ is pure differentially private ($\epsilon$-DP). The DP mechanisms maintain that the distribution on the output of the mechanism does not significantly change  when replacing a single client with reward $r_t$ with another client with reward $r'_t$. Thus, the adversary observing the output of the DP mechanism does not infer the clients rewards.

{\bf Local Differential Privacy (LDP).} If the central learner is untrusted,
we need a local private mechanism $\mathcal{M}$ whose output is all the information available to the central learner.
We denote the range of the output of the local mechanism by $\mathcal{Z}$.
\begin{definition} (\cite{kasiviswanathan2011can})~\label{def:ldp} A randomized mechanism $\mathcal{M}:[-1,1]\to \mathcal{Z}$ is said to be $\left(\epsilon_0,\delta_0\right)$ Local Differentially Private ($\left(\epsilon_0,\delta_0\right)$-LDP) if for any rewards $r_t$ and $r'_t$, and any subset of outputs $\mathcal{O}\subset\mathcal{Z}$, the algorithm $\mathcal{M}$ satisfies:
	\begin{equation}
		\Pr[\mathcal{M}\left(r_t\right)\in\mathcal{O}]\leq e^{\epsilon_0}\Pr[\mathcal{M}\left(r'_t\right)\in\mathcal{O}]+\delta_0.
	\end{equation}
\end{definition}
Similar to the DP definition, we say that  $\mathcal{M}$ is pure locally differentially private ($\epsilon_0$-LDP) when $\delta_0=0$. Observe that the input of the LDP mechanism is a single reward, and hence, each client preserves privacy of her observed reward $r_t$, even if the adversary knows what is the action she plays and observes a function of her reward. 

{\bf System Model.} We consider three different  models for private stochastic linear bandits. In all three cases, our setup is that of a learner, who asks clients to play publicly observable actions, and collects the resulting rewards using a secure communication channel (see Figure~\ref{fig:system_model}). The models differ on whether the learner is a trusted or untrusted server, and whether a shuffler is available or not. A shuffler simply performs a random permutation on its input.  
\\
{\bf 1) Central DP model:}  The learner is a {\bf trusted server} who can collect the clients' rewards and take actions.  Thus, the trusted server can apply a DP mechanism (see Definition~\ref{def:DP}) to preserve the privacy of the collected rewards against any adversary observing the actions of the clients.\\
{\bf 2) LDP model}: The learner is an {\bf untrusted server}. Hence, each client needs to privatize her own reward by applying an LDP mechanism (see Definition~\ref{def:ldp}) before sending it to the untrusted server. The server takes decisions on next actions using the collected privatized rewards.\\
{\bf 3) Shuffled model}: Similar to the LDP model, the learner is an {\bf untrusted  server}. However, we consider that there exists a {\bf trusted shuffler} that collects the LDP responses of the clients and randomly permutes them before passing them to the server, see Figure~\ref{fig:system_model}.

\section{Stochastic Linear Bandits with central DP}\label{sec:central}
In this section we consider the case where the learner is a trusted server. We present an algorithm that offers $\epsilon$-DP (see Definition~\ref{def:DP}) for stochastic linear bandits, with no regret penalty: we achieve the same order regret performance as the best algorithms  that operate under no privacy considerations. 

{\bf Main Idea.} Our algorithm follows the structure of elimination algorithms: it runs in batches, where in each batch $i$ we maintain a ``good set of actions'' $\mathcal{A}_i,$ that almost surely contain the optimal one, and gradually eliminate sub-optimal actions, shrinking the sets  $\mathcal{A}_i$ as $i$ increases.
As is fairly standard in elimination algorithms, in our case as well,
during batch $i$, the learner plays actions in  $\mathcal{A}_i$, calculates an updated estimate $\hat{\theta}_i$ of the unknown parameter vector $\theta_*$, and eliminates from   $\mathcal{A}_i$ actions
if their estimated reward is   $2\gamma_i$ from the estimated reward of the arm that appears to be best, where $\gamma_i$ is the confidence of the reward estimates.

Our new idea, that enables to make our algorithm offer  $\epsilon$-DP, is at a high level as follows.  {\bf If by playing a smaller number of distinct actions we are able to identify the optimal action, we need to overall add a smaller amount of noise to guarantee privacy than if we play a larger number of distinct actions.}
Indeed, if an action $a$ is played for $n_a$ times, the learner, to estimate $\theta_*$, only needs to use the sum of these $n_a$ rewards. To offer $\epsilon$-DP we can perturb this sum by adding independent Laplacian noise (Lap($\frac{1}{\epsilon}$)); clearly, the smaller the number of distinct actions we play, the smaller the overall amount of noise we need to add. Thus our algorithm, at each batch iteration $i$,   plays actions from a carefully selected  subset of $\mathcal{A}_i$, of cardinality as small as possible. The technical question we address is, starting from a continuous action space $\mathcal{A}$, how to select at each batch iteration a small cardinality subset that maintains the ability to identify the optimal action.

We next describe the steps in implementing this idea.
Recall that our actions come from a set $\mathcal{A}\subseteq \mathbb{R}^{d}$, and we assume they are bounded, namely, $\|a\|_2\leq 1,\ \forall a\in \mathcal{A}$ (see Assumptions~\ref{ass:1} in Section~\ref{sec:problem_formulation}).\\
${\bf 1.}$  Our first step is to {\bf reduce the continuous action space to a discrete action space problem}. To do so,
we finely discretize $\mathcal{A}$ to create  what we call a $\zeta$-net,  a discrete set of actions  $\mathcal{N}_\zeta \subseteq \mathcal{A}$ such that distances are approximately preserved. Namely, for any $a\in \mathcal{A}$, there is some $a'\in \mathcal{N}_\zeta$ with $\|a'-a\|_2\leq \zeta$. Lemma~\ref{lem:net}, proved in~\cite[Cor. 4.2.13]{vershynin2018high}, states that we can always find such a discrete set with cardinality at most $(\frac{3}{\zeta})^{d}+d$. As a result, all the ``good sets'' $\mathcal{A}_i$ will also be discrete.  
\begin{lemma}( $\zeta$-net for $\mathcal{A}$~\cite{vershynin2018high})\label{lem:net}	For any set $\mathcal{A}\subseteq \{x\in \mathbb{R}^d| \|x\|_2\leq 1\}$ that spans $\mathbb{R}^d$, there is a set $\mathcal{N}_\zeta \subseteq \mathcal{A}$ ($zeta$-net) with cardinality at most $(\frac{3}{\zeta})^{d}+d$ such that $\mathcal{N}_\zeta$ spans $\mathbb{R}^d$, and for any $a\in \mathcal{A}$, there is some $a'\in \mathcal{N}_\zeta$ with $\|a'-a\|_2\leq \zeta$.
\end{lemma}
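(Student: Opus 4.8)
The plan is to invoke the standard covering-number bound for the Euclidean ball (Corollary 4.2.13 in \cite{vershynin2018high}) and then massage it into the stated form, which differs from the textbook statement in two ways: the net must be a subset of $\mathcal{A}$ itself (not of the ambient ball), and the net must span $\mathbb{R}^d$. First I would recall the textbook fact: the unit ball $B_2^d = \{x \in \mathbb{R}^d : \|x\|_2 \le 1\}$ admits a $\zeta$-net of cardinality at most $(1 + 2/\zeta)^d \le (3/\zeta)^d$ (for $\zeta \le 1$). Since $\mathcal{A} \subseteq B_2^d$, restricting such a net to a neighborhood of $\mathcal{A}$ gives an external net for $\mathcal{A}$; a routine argument then converts an external $\zeta/2$-net into an internal $\zeta$-net by replacing each net point with an arbitrary point of $\mathcal{A}$ within distance $\zeta/2$ of it (when such a point exists — if not, that net point covers nothing in $\mathcal{A}$ and may be discarded). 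This step produces a set $\mathcal{N}'_\zeta \subseteq \mathcal{A}$ with $|\mathcal{N}'_\zeta| \le (6/\zeta)^d$ and the covering property; after adjusting the constant (the textbook also gives the cleaner $(3/\zeta)^d$ bound directly via a volumetric/packing argument applied with the right radius, which I would use instead to avoid the factor blowup).

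The second and more delicate point is ensuring the net spans $\mathbb{R}^d$. The covering property alone does not guarantee this — a priori all net points could lie in a hyperplane. Here I would use the hypothesis that $\mathcal{A}$ spans $\mathbb{R}^d$: pick any basis $a^{(1)}, \dots, a^{(d)} \in \mathcal{A}$ of $\mathbb{R}^d$ and simply adjoin these $d$ points to the covering net, i.e., set $\mathcal{N}_\zeta = \mathcal{N}'_\zeta \cup \{a^{(1)}, \dots, a^{(d)}\}$. This adds at most $d$ points, explaining the additive $+d$ in the claimed cardinality bound $(3/\zeta)^d + d$, and it manifestly forces $\mathcal{N}_\zeta$ to span $\mathbb{R}^d$ since a spanning subset remains spanning under supersets. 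The covering property is inherited from $\mathcal{N}'_\zeta$ since we only enlarged the set.

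I would then just collect the pieces: $\mathcal{N}_\zeta \subseteq \mathcal{A}$, it spans $\mathbb{R}^d$, $|\mathcal{N}_\zeta| \le (3/\zeta)^d + d$, and for every $a \in \mathcal{A}$ there is $a' \in \mathcal{N}'_\zeta \subseteq \mathcal{N}_\zeta$ with $\|a - a'\|_2 \le \zeta$. The main obstacle — really the only non-mechanical step — is the internal-versus-external net conversion and tracking how it interacts with the constant in the cardinality bound; everything else is bookkeeping. In a fully detailed write-up one should be careful that the set $\mathcal{A}$ need not be closed, so "an arbitrary point of $\mathcal{A}$ within distance $\zeta/2$" should be chosen via the definition of the covering (each external net point that is within $\zeta/2$ of some point of $\mathcal{A}$ gets assigned one such witness), but this causes no real difficulty. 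Alternatively, and perhaps most cleanly, one can cite \cite[Cor.~4.2.13]{vershynin2018high} directly for the bound on an internal net of a subset of the ball and then only perform the spanning augmentation, which is what I would do to keep the proof short.
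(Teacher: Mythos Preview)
Your proposal is correct and aligns with the paper's treatment: the paper does not give an independent proof of this lemma but simply attributes it to \cite[Cor.~4.2.13]{vershynin2018high}. Your write-up actually goes further than the paper does, by spelling out the spanning-augmentation step (adjoining a basis $a^{(1)},\dots,a^{(d)}\in\mathcal{A}$) that accounts for the additive $+d$ in the cardinality bound; the paper leaves this implicit in the citation.
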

${\bf 2.}$
We introduce the use of a {\bf core set} $\mathcal{C}_i$, a subset of the actions of the set of ``good actions'' $\mathcal{A}_i$. During batch $i$, {\bf the learner only  plays actions  in $\mathcal{C}_i$, each with some probability $\pi_i(a)$}. Lemma~\ref{lem:core}, proved in[Ch.21]~\cite{lattimore2020bandit}, states that if $\mathcal{A}_i$ spans some space $\mathbf{R}^k$, we can find a core set of size  at most $Bk$ (with $B$ a constant) and an associated probability distribution $\pi$, so that, playing actions only  from $C_i$ enables to calculate a good estimate of $\langle a,\theta_*\rangle$ for each $a\in\mathcal{A}_i$ .
\begin{lemma}(Core set for $\mathcal{A}$~\cite{lattimore2020bandit})\label{lem:core}
	For any finite set of actions $\mathcal{A}\subset \{x\in \mathbf{R}^d|\|x\|_2\leq 1\}$ that spans $\mathbb{R}^d$, there is a subset $\mathcal{C}$ of size at most $Bd$ that spans $\mathbb{R}^d$, where $B$ is a constant, and a distribution $\pi$ on  $\mathcal{C}$ such that for any $a\in \mathcal{A}$
	\begin{equation}\label{eq:corelemm}
		a^\top \left(\sum_{\alpha \in \mathcal{C}} \pi(\alpha) \alpha \alpha^\top\right)^{-1} a\leq 2d.
	\end{equation}
	Moreover, $\mathcal{C}$ and $\pi$ can be found in polynomial time in $d$.
\end{lemma}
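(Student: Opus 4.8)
The plan is to derive \eqref{eq:corelemm} from the Kiefer--Wolfowitz equivalence theorem (equivalently, from properties of the minimum-volume ellipsoid enclosing $\mathcal{A}$) and then to treat the support-size and polynomial-time claims separately. For a probability distribution $\pi$ on $\mathcal{A}$, write $V(\pi)=\sum_{a\in\mathcal{A}}\pi(a)\,aa^\top$ and $g(\pi)=\max_{a\in\mathcal{A}}a^\top V(\pi)^{-1}a$, which is well defined for those $\pi$ with $V(\pi)$ invertible --- a nonempty set because $\mathcal{A}$ spans $\mathbb{R}^d$. Since the left-hand side of \eqref{eq:corelemm} is exactly $a^\top V(\pi)^{-1}a$, it suffices to produce, in $\mathrm{poly}(d)$ time, a distribution $\pi$ with support of size $O(d)$ and $g(\pi)\le 2d$; the requirement that $\mathcal{C}=\mathrm{supp}(\pi)$ span $\mathbb{R}^d$ is then automatic, since $V(\pi)$ is invertible.

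First I would pin down the best achievable value of $g$. Let $\pi^\star$ maximize the concave function $f(\pi)=\log\det V(\pi)$ over the simplex on the finite set $\mathcal{A}$; a maximizer exists by compactness and has $\det V(\pi^\star)>0$. Perturbing $\pi^\star$ by moving mass $s\in[0,1]$ onto a single action $a$ and differentiating $f$ at $s=0$ yields $a^\top V(\pi^\star)^{-1}a-d\le 0$ for every $a\in\mathcal{A}$, hence $g(\pi^\star)\le d$; on the other hand $\sum_{a}\pi^\star(a)\,a^\top V(\pi^\star)^{-1}a=\mathrm{tr}\!\big(V(\pi^\star)^{-1}V(\pi^\star)\big)=d$, so $g(\pi^\star)=d$ with equality on $\mathrm{supp}(\pi^\star)$. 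This already establishes \eqref{eq:corelemm} for the $D$-optimal design, even with the stronger constant $d$ in place of $2d$.

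Next I would control the support and make the construction algorithmic. Since $V(\pi)$ lies in the $\binom{d+1}{2}$-dimensional space of symmetric matrices, Carath\'eodory's theorem replaces $\pi^\star$ by a distribution with the same matrix $V(\pi^\star)$ and support at most $\binom{d+1}{2}+1$, preserving $g=d$ and already yielding a core set of size $\mathrm{poly}(d)$. To obtain support \emph{linear} in $d$ together with a polynomial-time construction, I would instead run a Frank--Wolfe (Fedorov--Wynn) iteration on $f$: initialize at the uniform distribution over a well-chosen spanning subset of $O(d)$ actions (e.g.\ a near-barycentric or volumetric spanner, whose initial $D$-optimality gap is controlled), and at each step shift mass toward the point mass at the action currently maximizing $a^\top V(\pi)^{-1}a$, using the closed-form optimal step size. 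Relaxing the target from $d$ to $2d$ is precisely what permits stopping after $O(d)$ such steps (up to lower-order terms); each step adds at most one action to the support and costs $\mathrm{poly}(d)$ arithmetic operations (a rank-one update of $V(\pi)^{-1}$, then maximizing a quadratic over the finite set $\mathcal{A}$), so the returned $\pi$ has support $\mathcal{C}$ of size $Bd$ and is computed in $\mathrm{poly}(d)$ time.

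The Kiefer--Wolfowitz step is a short Lagrangian computation, so the main obstacle is the last one: bounding the Frank--Wolfe convergence from the warm start sharply enough that the factor-$2$ guarantee $g(\pi)\le 2d$ --- which must hold uniformly over \emph{all} of $\mathcal{A}$, not merely over the current support --- is reached within $O(d)$ iterations, while simultaneously keeping the support size and the per-iteration cost polynomially bounded. This quantitative optimal-design analysis is the part we would import wholesale from \cite{lattimore2020bandit}.
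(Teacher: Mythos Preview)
Your proposal is correct and follows the standard Kiefer--Wolfowitz / Frank--Wolfe route; note, however, that the paper does not actually prove this lemma but simply cites \cite[Ch.~21]{lattimore2020bandit}, and its accompanying Remark merely sketches the same Frank--Wolfe computation you describe (including the $O(d)$ support guarantee from a careful initialization). So your approach is essentially the one the paper points to.
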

{\bf 3.} To preserve the privacy of rewards, we {\bf perturb the sum rewards of each action by adding Laplace noise}. Adding noise affects the confidence of the reward estimates $\gamma$ ({step~\ref{thr} in Algorithm~\ref{alg:cen} shows that $\gamma$ increases as $\epsilon$ decreases}), and thus delays the elimination of bad actions and increases the regret by an additive term of $\tilde{O}(\frac{1}{\epsilon})$. Replacing a possibly large set $\mathcal{A}_i$  with the smaller core set $\mathcal{C}_i$ effectively decreases the cumulative noise affecting the estimate of $\theta_*$.


{\bf Remark.}
{ The computation of $\mathcal{C},\pi$ can be formulated as a convex optimization problem with many efficient approximation algorithms available. One example is the Franke-Wolfe algorithm~\cite{frank1956algorithm,lattimore2020bandit} that starts with an initialization of $\pi$, $\pi_0$ and updates it according to 
\begin{equation}
	\pi_{i+1}(a)=(1-\nu_i)\pi_{i}(a)+\nu_i \mathbf{1}(a=\arg\max_{\alpha \in \mathcal{A}}\|\alpha\|^2_{V(\pi_i)^{-1}}),
\end{equation}
where $V(\pi_i)=\sum_{\alpha \in \mathcal{A}} \pi_i(\alpha) \alpha \alpha^\top $ and $\nu_i$ is a step size. If $\pi_0$ is chosen to be the uniform distribution over $\mathcal{A}$, then in $O(d\log \log |\mathcal{A}|)$ iterations we can find a $\pi$, and  $\mathcal{C}=\{a\in \mathcal{A}| \pi(a)\neq 0\}$ that satisfy \eqref{eq:corelemm}. Using a more sophisticated initialization, the dependence on $|\mathcal{A}|$ in the number of iterations can be eliminated entirely and the core set is guaranteed to have size of $O(d)$. 
}

{\bf Algorithm  Pseudo-Code.}
Our algorithm pseudo-code in Algorithm~\ref{alg:cen}, starts by initializing the good action set $\mathcal{A}_1$ to be an $\frac{1}{T}$-net of $\mathcal{A}$ according to Lemma~\ref{lem:net}. Then, the algorithm operates in batches that grow exponentially in length, where the length of batch $i$ is approximately $q^i$ and $q=(2T)^{1/\log T}$\footnote{We note that $e\leq q\leq e^2$.}. In each batch $i$, we construct the core set $C_i$ and the associated distribution $\pi_i$ $\mathcal{A}_i$ as per Lemma~\ref{lem:core}. Each action in $\mathcal{C}_i$ is pulled $n_{ia}=\ceil{\pi(a)q^i}$ times, where the length of batch $i$ is $n_i=\sum_{a\in \mathcal{C}_i}n_{ia}$. To preserve privacy, the sum of the rewards of each action is perturbed with Lap$(\frac{1}{\epsilon})$ noise. The learner uses these privatized sum rewards to compute the least squares estimate of $\theta_{*}$, $\hat{\theta}_i$.  At the end of batch $i$ the learner eliminates from $\mathcal{A}_i$ the  actions with estimated mean reward, $\langle a,\hat{\theta}_i \rangle$, that fail to be within $2\gamma_i$ from the action that appears to be best,   where $\gamma_i$ is our confidence in the mean estimates.
After the iteration $i=\log T -1$ is completed, the learner simply plays the action that appears to be best.

\begin{algorithm} 
\caption{$\epsilon$-DP algorithm for stochastic linear bandits: { central model}}\label{alg:cen}
\begin{algorithmic}[1]
	\State Input: set of actions $\mathcal{A}$,  time horizon $T$, and privacy parameter $\epsilon$.
	\State Let $\mathcal{A}_1$ be a $\zeta$-net for $\mathcal{A}$ as in Lemma~\ref{lem:net}, with $\zeta=\frac{1}{T}$.
	\State $q \gets (2T)^{1/\log T}$.
	\For{$i = 1:\log(T)-1$} 
	\State $\gamma_i\gets \sqrt{\frac{4d}{q^{i}}\log\left(4|\mathcal{A}_i|T^2\right)}+\frac{2Bd^2+2d\log\left(4|\mathcal{A}_i|T^2\right)}{\epsilon q^{i}}$.\label{thr}
	\State For $\mathcal{A}_i\subseteq\mathbf{R}^m$, $m\leq d$,
	let  $\mathcal{C}_i$ be a core set of size at most $Bm$ as in Lemma~\ref{lem:core} and $\pi_i$ the associated distribution.
	\State Pull each action $a\in \mathcal{C}_i$, $n_{ia}=\ceil{{\pi_i(a)q^i}}$ times to get rewards $r_{ia}^{(1)},...,r_{ia}^{(n_{ia})}$.
	\State~\label{step:cental_privacy} $\Bar{r}_{ia}\gets \sum_{k=1}^{n_{ia}}r_{ia}^{(k)},\ \hat{r}_{ia}\gets \Bar{r}_{ia}+z_{ia} \ \forall a\in \mathcal{C}_i$, where $z_{ia}$ is an independent noise that follows $\mathsf{Lap}(\frac{1}{\epsilon})$.
	\State $V\gets \sum_{a\in \mathcal{C}_i}n_{ia}aa^\top,\ \hat{\theta}_i\gets V^{-1}\sum_{a\in \mathcal{C}_i}\hat{r}_{ia}a$.
	\State $\mathcal{A}_{i+1}\gets \{a\in \mathcal{A}_i| \langle a,\hat{\theta}_i\rangle\geq \max_{\alpha \in \mathcal{A}}\langle \alpha,\hat{\theta}_i \rangle-2\gamma_i\}$
	\EndFor
	\State Play action $\arg\max_{\alpha\in \mathcal{A}_{\log(T)-1}}\langle \alpha,\hat{\theta}_{\log(T)-1} \rangle$ for the remaining time.		
\end{algorithmic}
\end{algorithm}

{\bf Algorithm Performance.} We next prove that Algorithm~\ref{alg:cen} is $\epsilon$-DP and provide a bound on its regret.
\begin{theorem}\label{thm:1}
Algorithm~\ref{alg:cen} is $\epsilon$-differentially private. Moreover, it achieves a regret
\begin{equation}\label{eq:reg_prob}
	R_T\leq C\left(\sqrt{T\log T}+\frac{\log^2 T}{\epsilon}\right),
\end{equation}
with probability at least $1-\frac{1}{T}$, where $C$ is a constant that does not depend on $\epsilon,T$.
\end{theorem}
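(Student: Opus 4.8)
\emph{Privacy.} The plan is to view the played actions as a post-processing of Laplace-perturbed partial sums taken over disjoint blocks of rewards. Fix neighbouring reward sequences $\mathcal{R},\mathcal{R}'$ differing only at round $t$; let $i$ be the batch containing round $t$ and $a\in\mathcal{C}_i$ the action round $t$ is assigned to. Every quantity the algorithm computes strictly before round $t$ --- $\hat\theta_1,\dots,\hat\theta_{i-1}$, the sets $\mathcal{A}_1,\dots,\mathcal{A}_i$, the core sets and sampling distributions $\mathcal{C}_1,\pi_1,\dots,\mathcal{C}_i,\pi_i$, and hence all pull counts in batches $1,\dots,i$ --- depends on the rewards only through rounds before $t$; therefore both runs assign round $t$ to the same pair $(i,a)$ and every partial sum other than $\bar r_{ia}$ is computed from identical rewards. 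Thus the change at round $t$ perturbs only $\bar r_{ia}$, with $\ell_1$-sensitivity at most the width of the reward range. Conditioning on all the noises other than $z_{ia}$ (they have the same law in both runs and are independent of $z_{ia}$), the entire output is a deterministic function of $\hat r_{ia}=\bar r_{ia}+z_{ia}$, so by the Laplace-mechanism likelihood-ratio bound and post-processing the algorithm is $\epsilon$-DP. (Equivalently: within a batch the partial sums use disjoint rewards, so parallel composition applies, and the single differing round lies in one batch.)

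\emph{High-probability regret: reduction and confidence radius.} Since $\mathcal{A}_1=\mathcal{N}_{1/T}$ is a $(1/T)$-net of $\mathcal{A}$ and $\|\theta_\star\|_2\le1$, we have $\max_{a\in\mathcal{A}}\langle\theta_\star,a\rangle\le\langle\theta_\star,a^{\star\star}\rangle+1/T$ with $a^{\star\star}=\arg\max_{a\in\mathcal{A}_1}\langle\theta_\star,a\rangle$, so it is enough to bound the regret against $a^{\star\star}$ and add $T\cdot(1/T)=1$. The core step is the good event $\mathcal{E}$: for all $i$ and all $b\in\mathcal{A}_i$, $|\langle b,\hat\theta_i-\theta_\star\rangle|\le\gamma_i$. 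Writing $\hat\theta_i-\theta_\star=V^{-1}\sum_{a\in\mathcal{C}_i}\big(\sum_{k=1}^{n_{ia}}\eta_{ia}^{(k)}+z_{ia}\big)a$ with $V=\sum_{a\in\mathcal{C}_i}n_{ia}aa^{\top}$, split $\langle b,\hat\theta_i-\theta_\star\rangle$ into a sub-Gaussian part $\sum_{a,k}(b^{\top}V^{-1}a)\eta_{ia}^{(k)}$ and a Laplace part $\sum_a(b^{\top}V^{-1}a)z_{ia}$. The identity $\sum_{a\in\mathcal{C}_i}n_{ia}(b^{\top}V^{-1}a)^2=\|b\|_{V^{-1}}^2$ makes the first part sub-Gaussian with variance proxy $\|b\|_{V^{-1}}^2$; since $n_{ia}\ge\pi_i(a)q^i$ we get $V\succeq q^i\sum_a\pi_i(a)aa^{\top}$, and Lemma~\ref{lem:core} applied to $\mathcal{A}_i$ gives $\|b\|_{V^{-1}}^2\le 2d/q^i$ for every $b\in\mathcal{A}_i$, so a sub-Gaussian tail bound at failure level $1/(|\mathcal{A}_i|T^2)$ produces exactly the first summand of $\gamma_i$. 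For the Laplace part I would bound $|b^{\top}V^{-1}a|\le\|b\|_{V^{-1}}\|a\|_{V^{-1}}\le 2d/q^i$ and use a Bernstein bound for the sub-exponential sum $\sum_{a\in\mathcal{C}_i}|z_{ia}|$, which has only $|\mathcal{C}_i|\le Bd$ terms (Lemma~\ref{lem:core}) and concentrates at $O((Bd+\log(|\mathcal{A}_i|T^2))/\epsilon)$; the product is the second summand of $\gamma_i$. A union bound over the $O(\log T)$ batches and the $|\mathcal{N}_{1/T}|\le(3T)^d+d$ actions gives $\Pr[\mathcal{E}]\ge1-1/T$.

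\emph{Summing the regret on $\mathcal{E}$.} On $\mathcal{E}$, standard elimination bookkeeping applies: $a^{\star\star}$ is never eliminated (its estimated reward is within $\gamma_i$ of its true reward, which is the largest true reward in $\mathcal{A}_i$, so it stays within $2\gamma_i$ of the estimated best action), and any $b\in\mathcal{A}_i$ with $i\ge2$ has gap $\langle\theta_\star,a^{\star\star}-b\rangle\le 4\gamma_{i-1}$ because it survived elimination in batch $i-1$. Hence batch $i$ contributes at most $n_i\cdot4\gamma_{i-1}$ to the regret, with $n_i=\sum_{a\in\mathcal{C}_i}\lceil\pi_i(a)q^i\rceil\le q^i+Bd$, and the final phase plays an action of gap $\le2\gamma_{\log T-1}$. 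Summing: the square-root contributions satisfy $n_i\gamma_{i-1}\lesssim q^i\sqrt{d\log(|\mathcal{A}_i|T^2)/q^{i-1}}=\sqrt q\sqrt{q^i d\log(|\mathcal{A}_i|T^2)}$, a geometric series in $\sqrt{q^i}$ dominated by the last batch where $q^{\log T-1}=2T/q$, which --- using $\log(|\mathcal{A}_i|T^2)=O(d\log T)$ and $e\le q\le e^2$ --- totals $O(d\sqrt{T\log T})$; the $1/\epsilon$ contributions satisfy $n_i\gamma_{i-1}\lesssim q^i\cdot\frac{Bd^2+d\log(|\mathcal{A}_i|T^2)}{\epsilon q^{i-1}}=O(d^2\log T/\epsilon)$ per batch, and over the $O(\log T)$ batches these total $O(d^2\log^2 T/\epsilon)$; the final phase contributes $O(d\sqrt{T\log T}+d^2\log T/\epsilon)$ after substituting $q^{\log T-1}=2T/q$ into $\gamma_{\log T-1}$. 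Adding the $O(1)$ net term yields $R_T\le C(\sqrt{T\log T}+\log^2 T/\epsilon)$ with $C$ depending only on $d$ and the absolute constant $B$, on $\mathcal{E}$, i.e.\ with probability at least $1-1/T$.

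\emph{Main obstacle.} The heart of the argument is the confidence radius $\gamma_i$: getting its privacy-induced term to scale as $1/q^i$ rather than $1/\sqrt{q^i}$ --- the latter only gives $\tilde{O}(\sqrt T/\epsilon)$ additive regret, no better than prior work --- relies on the core set having size $O(d)$, which forces $\max_{a\in\mathcal{C}_i}\|a\|_{V^{-1}}^2=O(d/q^i)$ and $\sum_{a\in\mathcal{C}_i}|z_{ia}|=O(d/\epsilon)$ simultaneously; the latter needs a concentration bound for a sum of Laplace magnitudes, not a union bound over the individual $z_{ia}$. The sub-Gaussian estimate, the elimination analysis, and the geometric summation are then routine.
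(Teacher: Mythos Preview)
Your proposal is correct and follows essentially the same route as the paper: Laplace mechanism plus post-processing for privacy, a per-batch confidence radius $\gamma_i$ obtained by splitting $\langle b,\hat\theta_i-\theta_\star\rangle$ into a sub-Gaussian part (controlled via $\|b\|_{V^{-1}}^2\le 2d/q^i$) and a Laplace part (controlled via $|\mathcal{C}_i|\le Bd$), and then the standard elimination bookkeeping summed geometrically over batches. The only cosmetic difference is in the Laplace term: the paper applies a sub-exponential concentration bound directly to the weighted sum $\sum_{a\in\mathcal{C}_i}(b^\top V^{-1}a)\,z_{ia}$ (its Lemma~\ref{lemm:laplace}), whereas you first bound each coefficient by Cauchy--Schwarz in the $V^{-1}$-inner product and then concentrate $\sum_{a}|z_{ia}|$; both yield the same $O\!\big((Bd^2+d\log(|\mathcal{A}_i|T^2))/(\epsilon q^i)\big)$ and your justification of $|b^\top V^{-1}a|\le\|b\|_{V^{-1}}\|a\|_{V^{-1}}$ is in fact cleaner than the paper's.
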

{\bf Proof Outline.} The privacy result follows from the Laplace mechanism \cite{dwork2014algorithmic}. To bound the regret, we first argue that with probability at least $1-\frac{1}{T}$,  and for all $i$ and all $a\in \mathcal{A}_i$, we have that $|\langle a,\hat{\theta}_i \rangle-\langle a,\hat{\theta}_\star \rangle|\leq \gamma_i$. Conditioned on this event, an action with gap $\Delta_a$ is eliminated when, or before, $\gamma_i< \Delta_a/2$. Hence, all actions in batch $i$ have gap that is at most $4\gamma_i$. The regret bound follows by summing $4\gamma_i n_i$ for all batches.
The  complete proof is provided in Appendix~\ref{sec:analysis_central}. \hfill{$\square$}

\begin{remark}
We note that the high probability bound in Theorem~\ref{thm:1} implies a bound in expectation
\begin{equation}
	\mathbb{E}[R_T]\leq C\left(\sqrt{T\log T}+\frac{\log^2 T}{\epsilon}\right).
\end{equation} 
This is because the regret is trivially $O(T)$ and the algorithm fails with probability $\frac{1}{T}$, which overall contributes $O(1)$  to the expectation.
\end{remark}
\begin{remark}
The regret in Theorem~\ref{thm:1} is optimal up to $\log T$ factor; a lower bound of $O(\sqrt{T})$ is proven in \cite{rusmevichientong2010linearly} for the non-private case, while a lower bound of $\frac{\log T}{\epsilon}$ is shown in \cite{shariff2018differentially} for the private case.
\end{remark}
\begin{remark}
We observe that the privacy parameter $\epsilon$ is typically $\approx 1$. In this case, the dominating term in the regret in \eqref{eq:reg_prob} is $O(\sqrt{T\log T})$ which matches the regret of the best known algorithm for the non-private case (see LinUCB in \cite{rusmevichientong2010linearly, abbasi2011improved}), and hence, we get privacy for free.
\end{remark}

\section{Stochastic Linear Bandits with LDP}\label{sec:LDP}
In this section, the learner is an untrusted server, and thus we design a linear bandit algorithm  (Algorithm~\ref{alg:local}) that operates under LDP constraints.

{\bf Main Idea.} 
As in Algorithm~\ref{alg:cen}, we here also utilize a core set of actions; the difference is that,
since the server is untrusted,
each client privatizes  her own reward before providing it to the  server.
Our algorithm offers an alternative approach to~\cite{han2021generalized} that achieves the same regret, while using operation in batches, which may in some applications be more implementation-friendly, and also forms a foundation for the  Algorithm~\ref{alg:shuffle} we discuss in the next section.


\begin{algorithm} 
	\caption{$\epsilon_0$-LDP algorithm for stochastic linear bandits: local model}\label{alg:local}
	\begin{algorithmic}[1]
		\State Input: set of actions $\mathcal{A}$,  time horizon $T$, and privacy parameter  $\epsilon_0$.
		\State Let $\mathcal{A}_1$ be a $\zeta$-net for $\mathcal{A}$ as in Lemma~\ref{lem:net}, with $\zeta=\frac{1}{T}$.
		\State $q \gets (2T)^{1/\log T}$.
		\For{$i = 1:\log(T)-1$} 
		\State \textbf{Client side}:
		\State \quad Receive action $a$ from the server. Play action $a$ and receive a reward $r$.
		\State~\label{step:local_privacy}\quad Send $\hat{r}=r+\mathsf{Lap}(\frac{1}{\epsilon_0})$.
		\State \textbf{Server side}:
		\State \quad Let $\mathcal{C}_i$ be a core set for $\mathcal{A}_i$ as in Lemma~\ref{lem:core} with distribution $\pi_i$, and $n_{ia}=\ceil{{\pi_i(a)q^i}}$.
		\State \quad Send each action $a\in \mathcal{C}_i$ to a set of $n_{ia}$ clients to get rewards $\hat{r}_{ia}^{(1)},...,\hat{r}_{ia}^{(n_{ia})}$.
		\State \quad $n_i\gets \sum_{a\in\mathcal{C}_i}n_{ia}$.
		\State~\label{step:local_threshold} \quad $\gamma_i\gets \sqrt{\frac{4d}{q^{i}}\log\left(4|\mathcal{A}_i|T^2\right)}+\frac{2d}{q^i\epsilon_0}\sqrt{n_i\log (4|\mathcal{A}_i|T^2)}$.\label{thr_local}
		\State \quad $\hat{r}_{ia}\gets \sum_{k=1}^{n_j}\hat{r}_{ia}^{(1)} \ \forall a\in \mathcal{C}_i$.
		\State \quad $V\gets \sum_{a\in \mathcal{C}_i}n_{ia}aa^\top,\ \hat{\theta}_i\gets V^{-1}\sum_{a\in \mathcal{C}_i}\hat{r}_{ia}a$.
		\State \quad $\mathcal{A}_{i+1}\gets \{a\in \mathcal{A}_i| \langle a,\hat{\theta}_i\rangle\geq \max_{\alpha \in \mathcal{A_i}}\langle \alpha,\hat{\theta}_i \rangle-2\gamma_i\}$. 
		\EndFor
		\State Play action $\arg\max_{\alpha\in \mathcal{A}_{\log(T)-1}}\langle \alpha,\hat{\theta}_{\log(T)-1} \rangle$ for the remaining time.	
	\end{algorithmic}
\end{algorithm}

{\bf Algorithm Pseudocode.} 
	Algorithm~\ref{alg:local}  operates like Algorithm~\ref{alg:cen}, except for the addition of $\mathsf{Lap}(1/\epsilon_0)$ noise for each reward individually as opposed to adding $\mathsf{Lap}(1/\epsilon)$ to the sum of the rewards of each arm in the central model. The value of $\gamma_i$ is adjusted to account for this change.
	{\bf Algorithm Performance.} The following  Theorem~\ref{thm:2} presents the privacy-regret tradeoffs of the LDP stochastic bandits Algorithm~\ref{alg:local}.  
	The  proof is deferred to Appendix~\ref{sec:analysis_local} and follows the same main steps as the proof of Theorem~\ref{thm:1}, but with the modified values of $\gamma_i$.
	\begin{theorem}\label{thm:2}
		Algorithm~\ref{alg:local} is $\epsilon_0$-LDP. Moreover, it achieves a regret
		\begin{equation}\label{eq:reg_prob_local}
			R_T\leq C(1+\frac{1}{\epsilon_0})\left(d\sqrt{dT\log T}\right),
		\end{equation}
		with probability at least $1-\frac{1}{T}$, where $C$ is a constant that does not depend on $\epsilon_0$ and $T$.
	\end{theorem}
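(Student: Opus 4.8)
The plan is to mirror the structure of the proof of Theorem~\ref{thm:1}, adjusting only the parts where the noise model changes. First, for the privacy claim: each client sends $\hat r = r + \mathsf{Lap}(1/\epsilon_0)$, and by Assumption~\ref{ass:1}.4 the reward satisfies $|r|\le 1$, so the sensitivity of the identity map $r\mapsto r$ is at most $2$ — actually $1$ after rescaling, but in any case a constant; the standard Laplace mechanism guarantee (\cite{dwork2014algorithmic}) gives that the single message is $\epsilon_0$-LDP in the sense of Definition~\ref{def:ldp} (one may need to absorb the constant sensitivity into $\epsilon_0$ or note the algorithm uses $\mathsf{Lap}(1/\epsilon_0)$ with the reward normalized; I would state it cleanly as ``each client's output is a post-processing of an $\epsilon_0$-LDP release of $r$''). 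Since each client only ever releases this one noisy reward and nothing else is observed by the server about that client, the overall mechanism is $\epsilon_0$-LDP. This part is short.

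Second, and the bulk of the work, is the concentration step establishing that with probability at least $1-1/T$, for every batch $i$ and every $a\in\mathcal{A}_i$ we have $|\langle a,\hat\theta_i\rangle - \langle a,\theta_\star\rangle|\le \gamma_i$ with the new $\gamma_i = \sqrt{\tfrac{4d}{q^i}\log(4|\mathcal{A}_i|T^2)} + \tfrac{2d}{q^i\epsilon_0}\sqrt{n_i\log(4|\mathcal{A}_i|T^2)}$. Writing $\hat\theta_i = V^{-1}\sum_{a\in\mathcal{C}_i}(\bar r_{ia} + Z_{ia})a$ where $\bar r_{ia}=\sum_k r_{ia}^{(k)}$ and $Z_{ia}=\sum_{k=1}^{n_{ia}}\mathsf{Lap}_k(1/\epsilon_0)$, the error $\langle a,\hat\theta_i - \theta_\star\rangle$ decomposes into the usual sub-Gaussian least-squares term (the $\eta$'s) plus a privacy-noise term $\langle a, V^{-1}\sum_{a'}Z_{ia'}a'\rangle$. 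For the first term I would use exactly the argument behind Theorem~\ref{thm:1}: $\langle a, V^{-1}\sum \eta\, a'\rangle$ is sub-Gaussian with variance proxy $a^\top V^{-1} a \le \tfrac{2d}{q^i}$ by the core-set bound in Lemma~\ref{lem:core} (using $n_{ia}\ge \pi_i(a)q^i$, hence $V \succeq q^i \sum \pi_i(a)aa^\top$), yielding the first summand of $\gamma_i$ after a union bound over $a\in\mathcal{A}_i$ and over the $\le\log T$ batches. For the privacy term, the key is that $Z_{ia}$ is a sum of $n_{ia}$ independent Laplace$(1/\epsilon_0)$ variables, so $\sum_a Z_{ia}a$ has, conditionally, a variance-like control: $\mathrm{Var}(\langle a, V^{-1}\sum_{a'}Z_{ia'}a'\rangle) = \tfrac{2}{\epsilon_0^2}\sum_{a'}n_{ia'}\,(a^\top V^{-1}a')^2$. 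Bounding this is where I expect the main obstacle: I would bound $(a^\top V^{-1}a')^2 \le (a^\top V^{-1}a)(a'^\top V^{-1}a')$ by Cauchy–Schwarz in the $V^{-1}$ inner product, then $\sum_{a'}n_{ia'}(a'^\top V^{-1}a') = \mathrm{tr}(V^{-1}\sum n_{ia'}a'a'^\top) = \mathrm{tr}(I_m) = m \le d$, giving variance proxy $\le \tfrac{2}{\epsilon_0^2}\cdot\tfrac{2d}{q^i}\cdot d = \tfrac{4d^2}{\epsilon_0^2 q^i}$. A Laplace-sum tail bound (sub-exponential, Bernstein-type) then contributes roughly $\tfrac{d}{\epsilon_0\sqrt{q^i}}\sqrt{\log(\cdots)}$ plus a lower-order term; reconciling this with the stated $\tfrac{2d}{q^i\epsilon_0}\sqrt{n_i\log(4|\mathcal{A}_i|T^2)}$ uses $n_i \le C q^i$ (the batch length, since $n_{ia}=\ceil{\pi_i(a)q^i}$ and $|\mathcal{C}_i|\le Bd$ so $n_i \le q^i + Bd \le Cq^i$), so $\tfrac{\sqrt{n_i}}{q^i} \approx \tfrac{1}{\sqrt{q^i}}$, matching up to constants.

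Third, conditioned on this good event, the elimination analysis is identical to Theorem~\ref{thm:1}: an action with suboptimality gap $\Delta_a$ survives batch $i$ only if $\gamma_i \ge \Delta_a/4$ (the factor-$4$ from the $2\gamma_i$ threshold plus the two-sided confidence width), the optimal action is never eliminated, and the per-batch regret is at most $4\gamma_i n_i$. Summing over batches: $n_i\gamma_i \lesssim \sqrt{d\, n_i \log(\cdot)} + \tfrac{d}{\epsilon_0}\sqrt{n_i\log(\cdot)}\cdot\tfrac{n_i}{q^i} \lesssim (1+\tfrac1{\epsilon_0})\,d\sqrt{n_i\log(\cdot)}$ using $n_i\lesssim q^i$; with $\log|\mathcal{A}_i| \le \log((3T)^d + d) = O(d\log T)$ this is $(1+\tfrac1{\epsilon_0})\,d\sqrt{d\,n_i\log T}$ per batch, and $\sum_i \sqrt{n_i} \le \sqrt{(\log T)\sum_i n_i} \le \sqrt{T\log T}$ by Cauchy–Schwarz over the $\le\log T$ batches. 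This gives the claimed $R_T \le C(1+\tfrac1{\epsilon_0})\,d\sqrt{dT\log T}$, and the final "play the best action forever" phase contributes only lower-order regret since $\gamma_{\log T - 1}$ is already small. The one place requiring care, beyond the variance computation above, is ensuring the union bound over $|\mathcal{A}_i| \le (3T)^d + d$ actions is absorbed correctly into the $\log(4|\mathcal{A}_i|T^2)$ factors and that these only cost an extra factor of $d$ under the square root, which is exactly what produces the $d\sqrt d$ dependence.
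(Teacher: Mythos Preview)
Your proof is correct and follows essentially the same architecture as the paper's: Laplace-mechanism privacy, a two-term decomposition of $\langle a,\hat\theta_i-\theta_*\rangle$ into the sub-Gaussian least-squares error and the Laplace-noise error, a union bound to establish the good event, and then the standard elimination accounting. Two minor technical deviations are worth flagging. First, for the privacy-noise term the paper simply bounds each coefficient $|\alpha^\top V_i^{-1}a|\le 2d/q^i$ uniformly and invokes its Laplace-sum tail lemma with $n=n_i$ terms; your Cauchy--Schwarz--plus--trace computation $\sum_{a'}n_{ia'}\|a'\|_{V_i^{-1}}^2=\mathrm{tr}(I)=m\le d$ is a cleaner and in fact tighter variance bound, though both coincide once $n_i\asymp q^i$. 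Second, in the regret summation you use Cauchy--Schwarz over batches, $\sum_i\sqrt{n_i}\le\sqrt{(\log T)\sum_i n_i}$, which costs an extra $\sqrt{\log T}$ and yields $d\sqrt{dT}\,\log T$ rather than the stated $d\sqrt{dT\log T}$; the paper instead exploits that $n_i\asymp q^i$ grows geometrically and sums $\sum_i q^{i/2}\lesssim q^{(\log T)/2}\asymp\sqrt{T}$ directly, which you should do as well to match the theorem exactly.
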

	\begin{remark} Since the regret is trivially bounded by $O(T)$ when Algorithm~\ref{alg:local} fails, which happens with probability $\frac{1}{T}$, we can upper bound the expected regret as
		\begin{equation}
			\mathbb{E}[R_T]\leq C(1+\frac{1}{\epsilon_0})\left(d\sqrt{dT\log T}\right).
		\end{equation}
	\end{remark}
	\begin{remark}\label{rem:local_loweps}
		When $\epsilon_0>1$, the regret $R_T$ would be $\mathcal{O}\left(\sqrt{T}\log(T)\right)$ that matches the non-private case. However, the constants of the regret convergence are larger than that of the non-private case. 
	\end{remark}
	\begin{remark}(Comparison to the central $(\epsilon,\delta)$-DP model.) Observe that when $\epsilon_0<1$, the dominating term in the regret bound is $R_T=\mathcal{O}\left(\frac{T\log(T)}{\epsilon_0}\right)$. In other words, we obtain the regret of the non-private case divided by the LDP parameter $\epsilon_0$. In contrast, the central DP parameter $\epsilon$ appears as an additive term in the regret of the central model. This difference is because, in the local model noise is added on every reward,
		while in the central model directly on the reward aggregates; thus the noise variance of the aggregate rewards and the confidence parameter $\gamma_i$ increases in the local model. 
		%
		In the high privacy regimes; for example, assume that  $\epsilon_0=\mathcal{O}\left(\frac{1}{T^{\alpha}}\right)$ for some $0<\alpha\leq \frac{1}{2}$, we  get a regret $R_T$ of order $\mathcal{O}\left(T^{\frac{1}{2}+\alpha}\right)$ that becomes linear function of $T$ as $\epsilon_0\to \frac{1}{\sqrt{T}}$.
	\end{remark}

\section{Stochastic Linear Bandits in the Shuffled Model}\label{sec:shuffled}
\begin{algorithm}
	\caption{DP algorithm for stochastic linear bandits: shuffled model}\label{alg:shuffle}
	\begin{algorithmic}[1]
		\State Input: set of actions $\mathcal{A}$, time horizon $T$, and privacy parameters $(\epsilon,\delta)$.
		\State Let $\mathcal{A}_1$ be a $\zeta$-net for $\mathcal{A}$ as in Lemma~\ref{lem:net},  with $\zeta=\frac{1}{T}$.
		\State $q \gets (2T)^{1/\log T}$.
		\For{$i = 1:\log(T)-1$} 
		\State \textbf{Client side}:
		\State \quad Receive action $a$ and the value $n_i$ from the shuffler.
		\State \quad Play action $a$ and receive a reward $r$.
		\State \quad $\epsilon_0^{(i)}\gets f_{n_i,\delta}^{-1}(\epsilon)$
		\State \quad Send $\hat{r}=r+\mathsf{Lap}(\frac{1}{\epsilon_0^{(i)}})$ to the shuffler.\label{noise}
		\State \textbf{Shuffler}:
		\State \quad Let $\mathcal{C}_i$ be a core set  for $\mathcal{A}_i$ as in Lemma~\ref{lem:core} with distribution $\pi_i$.
		\State \quad Let $n_{ia}=\ceil{{\pi_i(a)q^i}}, n_i\gets \sum_{a\in\mathcal{C}_i}n_{ia}$.
		\State \quad Let $\mathcal{A}_{\mathcal{C}_i}=\cup_{a\in \mathcal{C}_i} \{a\}_{l=1}^{n_{ia}}$ be a set of $n_i$ actions where action $a\in \mathcal{C}_i$ is repeated $n_{ia}$ times.
		\State \quad Let $a_1,...,a_{n_i}$ be an enumeration of $\mathcal{A}_{\mathcal{C}_i}$.
		\State \quad Send action $a_{\pi(j)}$ {and the value $n_i$} to client $j,\ j=1,...,n_i$, where $\pi$ is a random permutation of $1,...,n_i$. 
		\State \quad Receive the action-reward pairs $\{(a_1,\hat{r}_{ia_1}),...,(a_{n_i},\hat{r}_{ia_{n_i}})\}$, and send them to the server.
		\State \textbf{Server side}:		
		\State \quad Receive the action-reward pairs from the shuffler.
		\State \quad $\gamma_i\gets \sqrt{\frac{4d}{q^{i}}\log\left(4|\mathcal{A}_i|T^2\right)}+\frac{2d}{q^i\epsilon_0^{(i)}}\sqrt{n_i\log (4|\mathcal{A}_i|T^2)}$.
		\State \quad $\hat{r}_{ia}\gets \sum_{k=1}^{n_j}\hat{r}_{ia}^{(1)} \ \forall a\in \mathcal{C}_i$.
		\State \quad $V\gets \sum_{a\in \mathcal{C}_i}n_{ia}aa^\top,\ \hat{\theta}_i\gets V^{-1}\sum_{a\in \mathcal{C}_i}\hat{r}_{ia}a$.
		\State \quad $\mathcal{A}_{i+1}\gets \{a\in \mathcal{A}_i| \langle a,\hat{\theta}_i\rangle\geq \max_{\alpha \in \mathcal{A}}\langle \alpha,\hat{\theta}_i \rangle-2\gamma_i\}$. 
		\EndFor
		\State Play action $\arg\max_{\alpha\in \mathcal{A}_{\log(T)-1}}\langle \alpha,\hat{\theta}_{\log(T)-1} \rangle$ for the remaining time.
		
	\end{algorithmic}
\end{algorithm}
In this section, we consider the case of an untrusted server and a trusted shuffler. We propose Algorithm~\ref{alg:shuffle} that (almost) achieves the same order regret as the best non-private algorithms. 

{\bf Main idea.} To use shuffling, we need to use an algorithm that operates over batches of actions, so as to be able to shuffle them. 
The use of a core set is critical to enable a selection of actions that lead to a good estimate for $\theta\star$. For  example, if the original set $\mathcal{A}$ contains a large number of actions along one direction in the space, but only a few actions along other directions, then pulling each action in $\mathcal{A}$ once will not result in a good estimate of $\theta_\star$. Use of the core set and the associated distribution $\pi$ will balance such assymetries and enable to explore  multiple directions of the space for a sufficient number of times to acquire a good estimate of $\theta_\star$.

Accordingly, we follow the same approach as in Algorithm~\ref{alg:local} with two changes: we use a shuffler (in a manner tailored to bandits) to realize privacy amplification gains, and we adjust the amount of Laplace noise we add in each batch, depending on the batch size. 

We use the trusted shuffler as follows.  The actions to be played in the $i$th batch are shuffled by the trusted shuffler at the beginning of the batch. The shuffler  asks clients to play actions in the shuffled order. Then, at the end of the batch, the shuffler reverses the shuffling operation, associates every action with its observed LDP reward, and conveys it to the untrusted learner.\footnote{We assume  that the server cannot directly observe which action is played by which client, for instance due to geographical separation.}

We adjust the amount of added Laplace noise per batch as follows. To offer privacy guarantees, we want to add noise to the rewards so that the output of the shuffler is $\left(\epsilon,\delta\right)$-DP for each batch $i\in[\log(T)]$. This implies that the entire algorithm will be  $\left(\epsilon,\delta\right)$-DP, since we assume that each each client contributes at only one of the batches. The privacy amplification of the shuffling depends on the size of the batch (see e.g.~\cite[Theorem~$1$]{feldman2022hiding}); thus the larger the batch size, the less noise needs to be added to the rewards of the clients.
To ensure that the output of  batch $i$ is $\left(\epsilon,\delta\right)$-DP, it is suffcient to add to each reward noise $\mathsf{Lap}(\frac{1}{\epsilon_0^{(i)}})$, where $\epsilon_0^{(i)}\gets f_{n_i,\delta}^{-1}(\epsilon)$, and $n_i$ is the size of batch $i$. The function
$f_{n,\delta}:\mathbb{R}^+\to \mathbb{R}^+$  captures privacy amplification via shuffling \cite{feldman2022hiding} and is defined as follows
\begin{equation}
	f_{n,\delta}(\epsilon_0) = \log\left(1+\frac{e^{\epsilon_0}-1}{e^{\epsilon_0}+1}\left(\frac{8\sqrt{e^{\epsilon_0}\log(4/\delta)}}{\sqrt{n}}+\frac{8e^{\epsilon_0}}{n}\right)\right).
\end{equation}
Since the noise added to the rewards varies for each batch $i$, we modify the confidence bounds, $\gamma_i$, to reflect this.
The pseudo-code is provided in Algorithm~\ref{alg:shuffle}.




{\bf Algorithm Performance.}
The following theorem proves that Algorithm~\ref{alg:shuffle} is $(\epsilon,\delta)$-DP and provides an upper bound on its regret that matches the {information theoretic lower bound} for $\epsilon = \tilde{O}(\frac{1}{\sqrt{T}})$.

\begin{theorem}\label{thm:3}
	Algorithm~\ref{alg:shuffle} is $(\epsilon,\delta)$-differentially private.	Moreover, for $\epsilon=O(\sqrt{\frac{\log(1/\delta)}{T}})$ it achieves a regret
	\begin{equation}\label{eq:reg_prob}
		R_T\leq C\left(\sqrt{T\log T}+\frac{\sqrt{\log(1/\delta)}\log^{3/2} T}{\epsilon}\right),
	\end{equation}
	with probability at least $1-\frac{1}{T}$, where $C$ is a constant that does not depend on $\epsilon$ and $T$.
\end{theorem}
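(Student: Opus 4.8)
\textbf{Proof proposal for Theorem~\ref{thm:3}.}

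The plan is to establish the privacy claim first and then the regret bound, mirroring the structure of the proof of Theorem~\ref{thm:1} but accounting for the per-batch privacy amplification. For \emph{privacy}: in batch $i$, each of the $n_i$ participating clients adds independent $\mathsf{Lap}(1/\epsilon_0^{(i)})$ noise, which makes each client's released value $\epsilon_0^{(i)}$-LDP. By the definition of the amplification function $f_{n,\delta}$ taken from~\cite[Theorem~1]{feldman2022hiding}, the output of the shuffler applied to $n_i$ such LDP messages is $(f_{n_i,\delta}(\epsilon_0^{(i)}),\delta)$-DP; since $\epsilon_0^{(i)}=f_{n_i,\delta}^{-1}(\epsilon)$, this output is exactly $(\epsilon,\delta)$-DP. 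Because each client contributes to only one batch (a single round, by the one-client-per-action convention), the per-batch guarantees compose trivially: a neighboring pair of reward sequences differs in one round, hence in one batch, so the joint distribution over all batch outputs is $(\epsilon,\delta)$-DP by a direct inspection (no group-privacy or advanced-composition loss is needed). One must also check that the random permutation used by the shuffler and its inverse do not leak anything: the server only sees the action-reward pairs, and the post-processing inequality for DP finishes this.

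For the \emph{regret}, I would first derive a concentration bound for the per-batch estimate. Conditioned on the good-actions set $\mathcal{A}_i$ spanning $\mathbb{R}^m$ with $m\le d$, the estimator is $\hat\theta_i = V^{-1}\sum_{a\in\mathcal{C}_i}\hat r_{ia} a$ where $V=\sum_{a\in\mathcal{C}_i} n_{ia} aa^\top$ and $\hat r_{ia}$ is the sum of $n_{ia}$ noisy rewards. Writing $\hat r_{ia} = \sum_k r_{ia}^{(k)} + \sum_k z_{ia}^{(k)}$, the error $\langle a,\hat\theta_i-\theta_\star\rangle$ splits into a sub-Gaussian reward-noise term and a Laplace-noise term. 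For $a\in\mathcal{A}_i$, by Lemma~\ref{lem:core} and Cauchy–Schwarz the reward-noise term is bounded by $\sqrt{(2d/q^i)\log(4|\mathcal{A}_i|T^2)}$ with probability $1-1/(2|\mathcal{A}_i|T^2)$; the Laplace term has variance $\sum_{a\in\mathcal{C}_i} n_{ia}\cdot 2/(\epsilon_0^{(i)})^2 \cdot \|a\|_{V^{-1}}^2$-type quantity and a concentration inequality for sums of Laplace variables gives that it is at most $\frac{d}{q^i\epsilon_0^{(i)}}\sqrt{n_i\log(4|\mathcal{A}_i|T^2)}$ (up to the constant hidden in $\gamma_i$) with the same probability. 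A union bound over $a\in\mathcal{A}_i$ and over the $\log T$ batches yields that, with probability at least $1-1/T$, $|\langle a,\hat\theta_i-\theta_\star\rangle|\le\gamma_i$ for all $i$ and all $a\in\mathcal{A}_i$, where $\gamma_i$ is exactly the quantity defined in Algorithm~\ref{alg:shuffle}. As in Theorem~\ref{thm:1}, on this event the optimal action is never eliminated, and any action surviving into batch $i$ has suboptimality gap at most $4\gamma_{i-1}$, so the batch-$i$ regret is at most $4\gamma_{i-1} n_i = O(\gamma_{i-1} q^i)$.

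It remains to sum $\sum_i \gamma_{i} q^{i+1}$, and this is where the amplification analysis enters. The first term of $\gamma_i$ contributes $\sum_i \sqrt{(d/q^i)\log(\cdot)}\, q^{i+1} = O(\sqrt{dT\log T}\cdot\sqrt{\log T})$ as in the non-private case (summing a geometric-type series in $q^{i/2}$ over $\log T$ batches). For the second term I need a lower bound on $\epsilon_0^{(i)}=f_{n_i,\delta}^{-1}(\epsilon)$: inverting $f_{n,\delta}$, in the regime $\epsilon = O(\sqrt{\log(1/\delta)/T})$ and for $n_i$ ranging over $q,q^2,\dots$, one shows $\epsilon_0^{(i)} = \Omega\!\big(\epsilon\sqrt{n_i/\log(1/\delta)}\big)$ (this is the standard ``amplification by $\sqrt{n}$'' statement; the smallness hypothesis on $\epsilon$ guarantees we stay in the linear-amplification branch of $f$ rather than the $e^{\epsilon_0}/n$ branch). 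Plugging this in, the second term of $\gamma_i$ becomes $O\!\big(\frac{d}{q^i\epsilon}\cdot\frac{\sqrt{\log(1/\delta)}}{\sqrt{n_i}}\sqrt{n_i\log(\cdot)}\big) = O\!\big(\frac{d\sqrt{\log(1/\delta)\,\log(\cdot)}}{q^i\epsilon}\big)$, so its contribution to the regret is $\sum_i O\!\big(\frac{\sqrt{\log(1/\delta)}\,\log(\cdot)}{q^i\epsilon}\big) q^{i+1} = O\!\big(\frac{\sqrt{\log(1/\delta)}\,\log^{3/2}T}{\epsilon}\big)$ after summing $\log T$ equal-order terms and absorbing the $\log(4|\mathcal{A}_i|T^2)=O(d\log T)$ factor. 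Combining the two gives the claimed bound. The main obstacle I anticipate is precisely the careful inversion of $f_{n,\delta}$: one must verify that for \emph{every} batch size $n_i\in\{q,\dots,q^{\log T-1}\}$ the chosen $\epsilon$ lands in the regime where $f_{n_i,\delta}^{-1}(\epsilon)\gtrsim \epsilon\sqrt{n_i/\log(1/\delta)}$, which is exactly why the hypothesis $\epsilon = O(\sqrt{\log(1/\delta)/T})$ is needed — the smallest batches are the binding constraint — and to confirm $f_{n_i,\delta}^{-1}(\epsilon)$ is well-defined (i.e. $\epsilon$ is in the range of $f_{n_i,\delta}$) for all such $n_i$.
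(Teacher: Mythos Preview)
Your proposal is correct and follows essentially the same route as the paper: per-batch $(\epsilon,\delta)$-DP via amplification-by-shuffling of the $\epsilon_0^{(i)}$-LDP reports (with the reduction you flag, which the paper carries out by viewing each user's data as $\cup_{a\in\mathcal{C}_i}\{(a,r_a^{(j)})\}$ and then permuting users before applying the local randomizer), trivial composition across batches since each client appears once, and the regret bound via the same elimination-style concentration lemma as in Theorem~\ref{thm:2} combined with the inversion $\epsilon_0^{(i)}=\Omega\big(\epsilon\sqrt{n_i/\log(1/\delta)}\big)$. One small slip: the binding constraint for the hypothesis $\epsilon=O(\sqrt{\log(1/\delta)/T})$ is the \emph{largest} batch, not the smallest, since $\epsilon_0^{(i)}\approx\epsilon\sqrt{n_i/\log(1/\delta)}$ is increasing in $n_i$ and must remain $O(1)$ for the linear-amplification regime of $f_{n,\delta}$ to apply.
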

{\bf Proof Outline.}
The proof of Theorem~\ref{thm:3} is deferred to Appendix~\ref{sec:analysis_shuffled}. {The privacy guarantee is proved by reducing the scheme to one that shuffles the rewards but does not shuffle the corresponding actions and using results from \cite{feldman2022hiding}. The regret analysis follows similar ideas as in Theorem~\ref{thm:1} and Theorem~\ref{thm:2}.}

\begin{remark}
	Note that if we had the shuffler to simply permute the collected rewards of the clients (and not the actions) we would get no privacy gains in some cases. 
	For example, consider the case where all actions to be pulled in a batch are unique and the {action pulled by each client} is known to the central learner (e.g., for MAB algorithms where the policy is a deterministic function of the history), then the learner can undo the shuffling using the action associated with each shuffled reward.
\end{remark}
\begin{remark} Algorithm~\ref{alg:shuffle} almost achieves the same order regret as the best non-private algorithms.
	Indeed,  Theorem~\ref{thm:3} proves that Algorithm~\ref{alg:shuffle}  achieves a regret that matches the regret of the central DP Algorithm~\ref{alg:cen} for the high privacy regimes $\epsilon=O(\sqrt{\log(1/\delta)/T})$. For the low privacy regime $\epsilon>1$, the shuffling does not offer privacy gains, $\epsilon_{0}^{(i)}\approx \epsilon$ for all $i\in[\log(T)]$ and the regret of Algorithm~\ref{alg:shuffle} is similar to the regret of Algorithm~\ref{alg:local} of the local DP model. However, for the low privacy regime the local DP model also achieves the same regret as  non-private algorithms up to constant factors (see Remark~\ref{rem:local_loweps}). Hence in both cases, Algorithm~\ref{alg:shuffle} achieves the same order regret as Algorithm~\ref{alg:cen} which {almost matches} the regret of non-private algorithms.
\end{remark}
\begin{remark} 
	Algorithm's~\ref{alg:shuffle} improved regret performance over Algorithm~\ref{alg:local} is thanks to the smaller amount of noise added to rewards. In particular, the noise added in Step~\ref{noise} of Algorithm~\ref{alg:shuffle} has variance $\frac{2}{{\epsilon_0^{(i)}}^2}\approx \frac{2}{n_i \epsilon^2}$ for small $\epsilon$.
\end{remark}

\section{Numerical Results}\label{sec:numerics}
\begin{figure}[!t]
	\centering
	\includegraphics[scale=0.8]{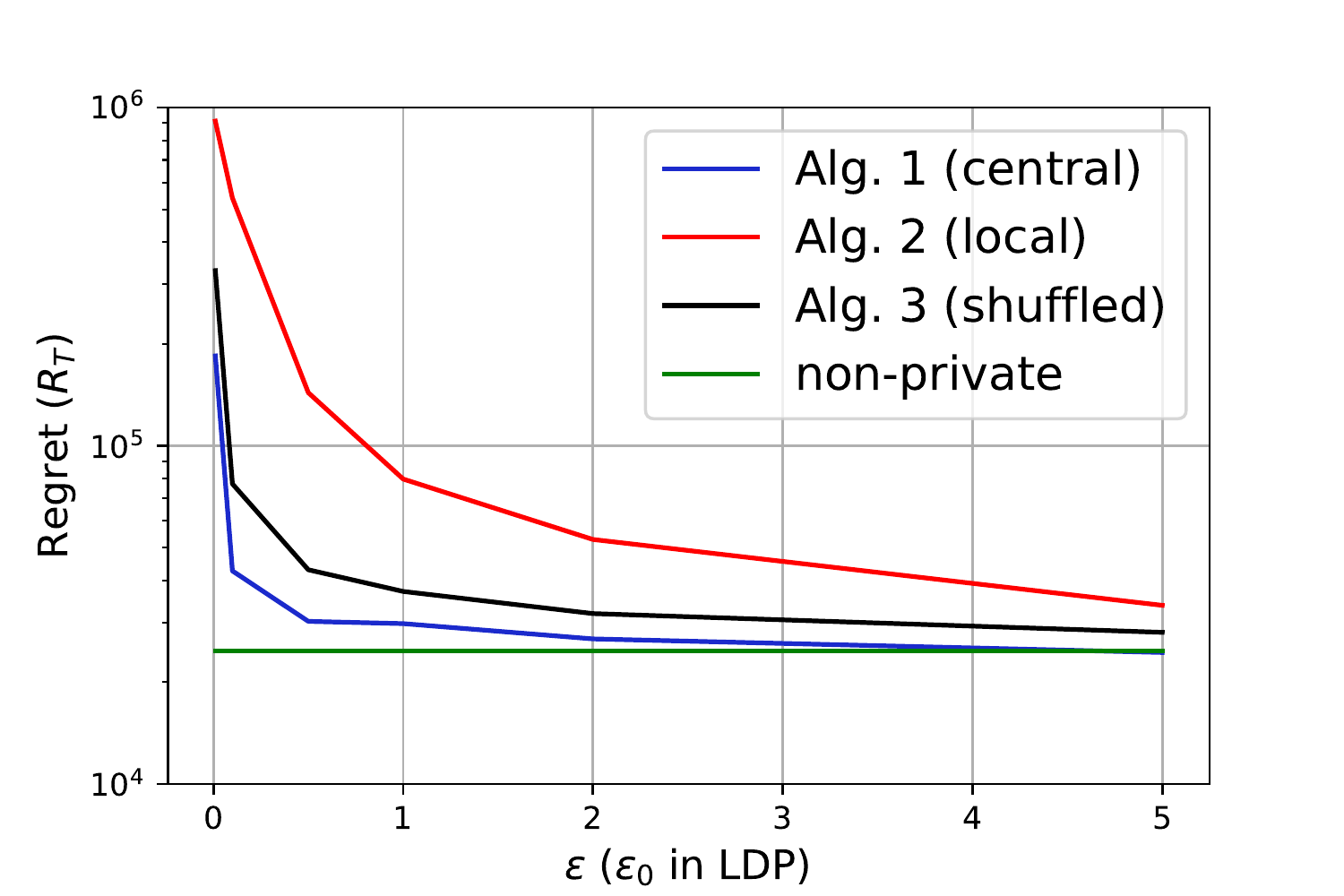}
	\captionof{figure}{Regret-privacy trade-offs  for stochastic linear bandits algorithms.}
	\label{fig:regret}
\end{figure}
We here present indicative results on the performance of our proposed Algorithms~\ref{alg:cen},~\ref{alg:local} and~\ref{alg:shuffle}; additional details and numerical evaluation plots are provided in Appendix~\ref{app:numerics}.  We consider synthetic data generated as follows. The set of actions $\mathcal{A}$ contains $K=10$ actions, where each action $a\in\mathcal{A}$ is a $d=2$-dimensional vector. The actions $a\in\mathcal{A}$ and the optimal parameter $\theta_{*}$ are generated uniformly at random from the unit ball $\mathcal{S}^{d-1}=\{x\in \mathbb{R}^d: ||x||_2= 1\}$ (a similar method is considered in~\cite{han2021generalized}).  Figure~\ref{fig:regret}  plots the total regret $R_T$ over an horizon $T=10^{6}$ as a function of the privacy budget ($\epsilon$ or $\epsilon_0$).  Figure~\ref{fig:regret} shows that the regret achieved by all three algorithms, Algorithm~\ref{alg:cen} (central model),  Algorithm~\ref{alg:local}  (local model), and Algorithm~\ref{alg:shuffle} (shuffled model) converges to the regret of non-private stochastic linear bandit algorithms~\cite[Ch.~$22$]{lattimore2020bandit} as $\epsilon\to\infty$ ($\epsilon_0\to \infty$), albeit at different rates. As predicted from the theoretical analysis, Algorithms~\ref{alg:cen} (central) and~\ref{alg:shuffle} (shuffled) offer privacy (almost) for free, closely following the non-private regret.

\appendix

\section{Regret and Privacy Analysis of The Central DP Model (Proof of Theorem~\ref{thm:1})}\label{sec:analysis_central}
\subsection{Privacy Analysis}
We first show that Algorithm~\ref{alg:cen} is $\epsilon$-DP. Let $\Bar{r}_i=[\Bar{r}_{ia_1},...,\Bar{r}_{ia_{|\mathcal{C}_i|}}],\ \hat{r}_i=[\hat{r}_{ia_1},...,\hat{r}_{ia_{|\mathcal{C}_i|}}]=\Bar{r}_i+z_i, z_i=[z_{ia_1},...,z_{ia_{|\mathcal{C}_i|}}]$, where $a_1,...,a_{|\mathcal{C}_i|}$ is an enumeration of the elements of $\mathcal{C}_i$. We construct the concatenated reward vector denoted by $\Bar{r}=[\Bar{r}_1,...,\Bar{r}_{\log(T)-1}]$, and let $\hat{r}=[\hat{r}_1,...,\hat{r}_{\log(T)-1}]=\Bar{r}+z, z=[z_1,...,z_{\log(T)-1}]$. 

Now consider two neighboring sequence of rewards $\mathcal{R}, \mathcal{R}'$, that only differ in $r_k, r'_k$, with corresponding concatenated reward vectors $\Bar{r}, \Bar{r}'$. We notice that each reward in $\mathcal{R}$ appears once in $\Bar{r}$, and similarly for $\mathcal{R}',\Bar{r}'$. Thus, we get:
\begin{equation}
	\|\Bar{r}-\Bar{r}'\|_1\leq \max_{r_k, r'_k} |r_k-r'_k|\leq 1,
\end{equation}
where the last inequality follows from Assumption~\ref{assump:bdd_reward} with bounded rewards $|r_k|\leq 1$. Then, from~\cite[Theorem~$3.6$]{Calibrating_DP06}, $\hat{r}$ is $\epsilon$-DP. We notice that the output of Algorithm~\ref{alg:cen} depends on $r_1,...,r_T$ only through $\hat{r}$. Hence, by post processing, Algorithm~\ref{alg:cen} is $\epsilon$-DP.

\subsection{Regret Analysis}~\label{subsec:central_regret}
We next prove the regret bound of Algorithm~\ref{alg:cen} for stochastic linear bandits. 

Our analysis follows the known confidence bound technique in~\cite{auer1995gambling} by designing confidence intervals (in step~\ref{thr}) that take into consideration the privacy effect. 

Let $K = \left({3T}\right)^{d}$ be the size of the $\frac{1}{T}$-net set $\mathcal{N}_{1/T}$ from Lemma~\ref{lem:net}. We first bound the following regret:
\begin{equation}\label{eq:reg_net}
	\tilde{R}_{T} = T\max_{a\in\mathcal{N}_{1/T}}\langle a,\theta_{*}\rangle -\sum_{t=1}^{T}\langle a_t,\theta_{*}\rangle, 
\end{equation} 
where $a_1,a_2,\ldots,a_{T} \in \mathcal{N}_{1/T}$. We then bound the regret $R_{T}$ by showing that we only loose a constant term when we choose actions from $\mathcal{N}_{1/T}$ instead of the bigger set $\mathcal{A}$. 

We start with a set of actions $\mathcal{A}_0=\mathcal{N}_{1/T}$ with cardinality $|\mathcal{A}_0|=K$. Furthermore, we have $|\mathcal{A}_i|\leq |\mathcal{A}_{i-1}|$, and hence, we get $|\mathcal{A}_i|\leq K$ for all $i\in[\log(T)]$.

For given batch $i\in[\log(T)]$, let $\mathcal{C}_i$ be the core set of $\mathcal{A}_i$ that has at most $Bd$ actions. At the $i$th batch, each action $a\in\mathcal{C}_i$ is picked $n_{ia}$ times, where $n_{ia}=\ceil{\pi_i(a)q^{i}}$. Let $\mathcal{G}$ be the good event $\left\{\left|\langle a,\hat{\theta}_i-\theta_*\rangle\right|< \gamma_i\ \forall i\in [\log T]\ \forall a\in \mathcal{A}_i\right\}$. Lemma~\ref{lem:concent} shows that the event $\mathcal{G}$ holds with probability at least $1-\frac{1}{T}$. In the remaining part of the proof, we condition on the event $\mathcal{G}$.

We first show that the best action $a_* = \arg\max_{a\in\mathcal{N}_{1/T}}\langle a,\theta_{*}\rangle$ will not be eliminated at any batch $i\in [\log T]$; this is because the elimination criterion will not hold for the optimal action $a_*$:
\begin{equation}
	\langle a,\hat{\theta}_i\rangle - \langle a_*,\hat{\theta}_i\rangle< (\langle a,{\theta}_*\rangle+\gamma_i) - (\langle a_*,{\theta}_*\rangle-\gamma_i)\leq 2\gamma_i\qquad \forall a\in \mathcal{A}_i\ \forall i\in [\log T].
\end{equation}
For each sub-optimal action $a\in \mathcal{A}_0$ with $\Delta_a=\langle a_*-a,\theta_* \rangle$, let $i$ be the smallest integer for which $\gamma_i<\frac{\Delta_a}{4}$. From the triangle inequality, we get that
\begin{equation}
	\langle a_*,\hat{\theta_i} \rangle - \langle a,\hat{\theta_i} \rangle\geq (\langle a_*,\hat{\theta_*} \rangle -\gamma_i) - (\langle a,\hat{\theta_i}\rangle +\gamma_i)=\Delta_a -2\gamma_i>2\gamma_i.
\end{equation}
This implies that $a$ will be eliminated before the beginning of batch $i+1$. Hence, each action $a\in \mathcal{A}_{i+1}$ at batch $i+1$ has a gap at most $4\gamma_i$. Let $n_i=\sum_{a\in \mathcal{C}_i}n_{ia}\leq Bd+q^i$ denote the total number of rounds at the $i$-th batch. Note that the number of batches is upper bounded by $\log T$ since $\sum_{i=1}^{\log T}q^i\geq T$. When $q^i<Bd$, the regret can be bounded by $2Bd$, and when $q^i\geq Bd$, we bound $n_i\leq 2q^i$. Thus, there is universal constants $C',C$ such that the total regret in \eqref{eq:reg_net} can be bounded as
\begin{align}
	\tilde{R}_T &\leq 2Bd\log(T) +\sum_{i=1}^{\log T}4n_i \gamma_{i-1} \\
	&\leq 2Bd\log(T) +\sum_{i=1}^{\log T} 8 q^i \left(\sqrt{\frac{4d}{q^{i-1}}\log\left(4KT^2\right)}+\frac{2Bd^2+2d\log\left(4KT^2\right)}{\epsilon q^{i-1}}\right)\nonumber \\
	&\leq C'\left(d\log (T)+ d\sqrt{\log T}\sum_{i=1}^{\log T} q^{(i-1)/2}+\frac{d^2\log^2 T}{\epsilon}\right)q\nonumber \\
	&\stackrel{(a)}{\leq} C'q\left(d\log (T)+ d\sqrt{\log T}q^{\log T/2}+\frac{d^2\log^2 T}{\epsilon}\right)\nonumber \\
	&\stackrel{(b)}{\leq} C'q\left(d\log (T)+ d\sqrt{T\log T}+\frac{d^2\log^2 T}{\epsilon}\right)\nonumber \\
	&\stackrel{(c)}{\leq} C\left(d\sqrt{T\log T}+\frac{d^2\log^2 T}{\epsilon}\right),
\end{align}
where step $(a)$ follows from the sum of a geometric series and $q>1$, step $(b)$ uses $q=(2T)^{1/\log T}$, and step $(c)$ follows from the facts $q\leq e^2,\ \log T = O(\sqrt{T})$.

Hence, with probability at least $1-\frac{1}{T}$ the regret in \eqref{eq:reg_net} is bounded as
\begin{equation}
	\tilde{R}_T\leq C\left(d\sqrt{T\log T}+\frac{d^2\log^2 T}{\epsilon}\right).
\end{equation}

Next, we bound the exact regret $R_T$. Observe that the first step in our Algorithm is to use the finite $\frac{1}{T}$-net set $\mathcal{N}_{1/T}$ of actions. Thus, for any round $t\in[T]$ and any action $a\in\mathcal{A}$, there exists an action $a'\in\mathcal{N}_{1/T}$ such that $\|a-a'\|\leq\frac{1}{T}$. As a result, we get $\langle a,\theta_{*}\rangle - \langle a',\theta_{*}\rangle\leq \|a-a'\| \|\theta_{*}\|\leq \frac{1}{T}$, where $\|\theta_{*}\|\leq 1$. Hence, there is a universal constant $C$ such that we can bound the regret $R_{T}$ as
\begin{equation}
	\begin{aligned}
		R_{T} &= T\max_{a\in\mathcal{A}}\langle a,\theta_{*}\rangle -\sum_{t=1}^{T}\langle a_t,\theta_{*}\rangle\\
		&=\left[ T\max_{a\in\mathcal{A}}\langle a,\theta_{*}\rangle-T\max_{a'\in\mathcal{N}_{1/T}}\langle a',\theta_{*}\rangle \right]+ \left[ T\max_{a'\in\mathcal{N}_{1/T}}\langle a',\theta_{*}\rangle -\sum_{t=1}^{T}\langle a_t,\theta_{*}\rangle\right]\\
		&\leq T\frac{1}{T} +\tilde{R}_{T}\\
		&= 1+\tilde{R}_{T}.
	\end{aligned}
\end{equation}  
Hence, with probability at least $1-\frac{1}{T}$ the regret $R_T$ is bounded as
\begin{equation}
	R_T\leq C\left(d\sqrt{T\log T}+\frac{d^2\log^2 T}{\epsilon}\right).
\end{equation}
This concludes the proof of Theorem~\ref{thm:1}.

\begin{lemma}\label{lem:concent} Let $\hat{\theta}_i$ be the least square estimate of $\theta_{*}$ at the end of the $i$th batch of Algorithm~\ref{alg:cen}. Then, we have that
	\begin{equation}
		\Pr\left[\left|\langle a,\hat{\theta}_i-\theta_*\rangle\right|> \gamma_i\ \forall i\in [\log T]\forall a\in \mathcal{A}_i \right]\leq \frac{1}{T},
	\end{equation}
	where $\gamma_i= \sqrt{\frac{4d}{q^{i}}\log\left(4KT^2\right)}+\frac{2Bd^2+2d\log\left(4KT^2\right)}{\epsilon q^{i}}$.
\end{lemma}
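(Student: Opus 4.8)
\textbf{Proof proposal for Lemma~\ref{lem:concent}.}
The plan is to expand $\hat\theta_i$, isolate the contribution of the sub-Gaussian reward noise from that of the Laplace privacy noise, bound each by a tail inequality conditioned on the history before batch $i$, and then union bound over the at most $K$ candidate actions and the at most $\log T$ batches. First I would write each raw reward as $r_{ia}^{(k)}=\langle a,\theta_*\rangle+\eta_{ia}^{(k)}$, so that $\hat r_{ia}=n_{ia}\langle a,\theta_*\rangle+S_{ia}+z_{ia}$ with $S_{ia}:=\sum_{k=1}^{n_{ia}}\eta_{ia}^{(k)}$ and $z_{ia}\sim\mathsf{Lap}(1/\epsilon)$. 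Since $V=\sum_{a\in\mathcal{C}_i}n_{ia}aa^\top$ and $\sum_{a\in\mathcal{C}_i}n_{ia}\langle a,\theta_*\rangle a=V\theta_*$, plugging in gives the identity $\hat\theta_i-\theta_*=V^{-1}\sum_{a\in\mathcal{C}_i}(S_{ia}+z_{ia})a$, hence for any $a'\in\mathcal{A}_i$,
\[
\langle a',\hat\theta_i-\theta_*\rangle=\underbrace{\sum_{a\in\mathcal{C}_i}c_a S_{ia}}_{(\mathrm{I})}+\underbrace{\sum_{a\in\mathcal{C}_i}c_a z_{ia}}_{(\mathrm{II})},\qquad c_a:=a'^\top V^{-1}a .
\]
I would split the target threshold as $\gamma_i=\gamma_i^{(1)}+\gamma_i^{(2)}$ with $\gamma_i^{(1)}=\sqrt{(4d/q^i)\log(4KT^2)}$ and $\gamma_i^{(2)}=(2Bd^2+2d\log(4KT^2))/(\epsilon q^i)$, and control $(\mathrm I)$ against $\gamma_i^{(1)}$ and $(\mathrm{II})$ against $\gamma_i^{(2)}$.

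The key geometric input is that $n_{ia}\ge\pi_i(a)q^i$, so $V\succeq q^i V(\pi_i)$ with $V(\pi_i)=\sum_{a\in\mathcal{C}_i}\pi_i(a)aa^\top$; combined with the core-set guarantee \eqref{eq:corelemm} this gives $\|b\|_{V^{-1}}^2\le q^{-i}\|b\|_{V(\pi_i)^{-1}}^2\le 2d/q^i$ for every $b\in\mathcal{A}_i$. (If $\mathcal{A}_i$ spans only an $m$-dimensional subspace I would run the whole argument inside that subspace, where $V$ and $V(\pi_i)$ are invertible and only the projection of $\theta_*$ enters $\langle a',\theta_*\rangle$ for $a'\in\mathcal{A}_i$.) Two consequences I would record: $\sum_{a\in\mathcal{C}_i}n_{ia}c_a^2=a'^\top V^{-1}VV^{-1}a'=\|a'\|_{V^{-1}}^2\le 2d/q^i$, and, by Cauchy-Schwarz in the $V^{-1}$ inner product, $\max_{a\in\mathcal{C}_i}|c_a|\le\|a'\|_{V^{-1}}\|a\|_{V^{-1}}\le 2d/q^i$.

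Next I would condition on the $\sigma$-field $\mathcal{F}$ of all randomness before batch $i$, which renders $\mathcal{A}_i,\mathcal{C}_i,\pi_i,V$ and the counts $n_{ia}$ deterministic. For $(\mathrm I)$: it is a sum of $n_i$ increments $c_a\eta_{ia}^{(k)}$ whose coefficients are $\mathcal{F}$-measurable and each of which is conditionally $1$-sub-Gaussian by Assumption~\ref{ass:1}; peeling off one $\eta$ at a time with the tower rule makes $(\mathrm I)$ sub-Gaussian with variance proxy $\sum_a n_{ia}c_a^2\le 2d/q^i$, so $\Pr[|(\mathrm I)|>\gamma_i^{(1)}\mid\mathcal{F}]\le 2\exp(-(\gamma_i^{(1)})^2 q^i/(4d))=1/(2KT^2)$. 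For the Laplace term $(\mathrm{II})$ I would use a sub-exponential (Bernstein-type) tail for a weighted sum of i.i.d.\ $\mathsf{Lap}(1/\epsilon)$ variables, feeding in $\sum_a c_a^2\le 2d/q^i$, $\max_a|c_a|\le 2d/q^i$ and $|\mathcal{C}_i|\le Bd$; concretely, via $|(\mathrm{II})|\le(\max_a|c_a|)\sum_{a\in\mathcal{C}_i}|z_{ia}|$, where $\sum_{a\in\mathcal{C}_i}|z_{ia}|$ is a Gamma variable with at most $Bd$ exponential summands, whose upper tail at level $(Bd+\log(4KT^2))/\epsilon$ is $\le 1/(2KT^2)$, so that $\Pr[|(\mathrm{II})|>\gamma_i^{(2)}\mid\mathcal{F}]\le 1/(2KT^2)$ (the exact constants $2Bd^2,2d$ come out of the bookkeeping).

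Finally I would undo the conditioning and union bound. Every $\mathcal{A}_i$ lies inside the fixed $\tfrac1T$-net $\mathcal{N}_{1/T}$ with $|\mathcal{N}_{1/T}|\le K$, and membership $a'\in\mathcal{A}_i$ is $\mathcal{F}$-measurable, so for each $a'\in\mathcal{N}_{1/T}$ and each batch $i$,
\[
\Pr\big[a'\in\mathcal{A}_i,\ |\langle a',\hat\theta_i-\theta_*\rangle|>\gamma_i\big]=\mathbb{E}\big[\mathbf{1}(a'\in\mathcal{A}_i)\,\Pr\big(|\langle a',\hat\theta_i-\theta_*\rangle|>\gamma_i\,\big|\,\mathcal{F}\big)\big]\le \frac{1}{KT^2},
\]
and summing over the $\le K$ actions and $\le\log T$ batches gives total failure probability $\le \log T/T^2\le 1/T$, which is the claim. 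The step I expect to be the real obstacle is the adaptivity: $\mathcal{C}_i$, the design matrix $V$ and the counts $n_{ia}$ all depend on the random set $\mathcal{A}_i$, so the variance-proxy and $|c_a|$ bounds must hold uniformly over realizations --- which is precisely why I condition on $\mathcal{F}$ and union bound over the fixed net $\mathcal{N}_{1/T}$ rather than over the random $\mathcal{A}_i$ --- together with squeezing the Laplace tail into the stated $1/\epsilon$ (rather than $1/\epsilon^2$) shape of $\gamma_i^{(2)}$.
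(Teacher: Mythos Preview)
Your proposal is correct and follows essentially the same approach as the paper: decompose $\hat\theta_i-\theta_*$ into a sub-Gaussian noise term and a Laplace privacy term, use $V\succeq q^i V(\pi_i)$ together with the core-set guarantee to bound $\|a'\|_{V^{-1}}^2$ and the coefficients $c_a$, and then apply a sub-Gaussian tail for $(\mathrm I)$ and a sub-exponential/Laplace tail for $(\mathrm{II})$ before union bounding. The paper handles $(\mathrm{II})$ via its Lemma~\ref{lemm:laplace} applied directly to $\sum_a c_a z_{ia}$ rather than through $\sum_a|z_{ia}|$ and a Gamma tail, but this is cosmetic; if anything, your explicit conditioning on the pre-batch $\sigma$-field and your union bound over the fixed net $\mathcal{N}_{1/T}$ (rather than the random $\mathcal{A}_i$) make the adaptivity argument more transparent than the paper's own write-up.
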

\begin{proof}
	Let $\hat{\theta}_i= V_i^{-1}\sum_{a\in \mathcal{C}_i}\hat{r}_{ia}a$ be the private estimate of $\theta_{*}$ and $\overline{\theta}_i= V_i^{-1}\sum_{a\in \mathcal{C}_i}\overline{r}_{ia}a$ be the non-private estimate of $\theta_*$ as $\lbrace\overline{r}_{ia}\rbrace$ are the non-private rewards, where $V_i=\sum_{a\in \mathcal{C}_i}n_{ia}aa^\top$. From~[Chapter~$21$, Eqn~$21.1$]{}, for each $a\in\mathcal{A}_i$, we get:
	\begin{equation}~\label{eqn:first_bound}
		\Pr\left[\langle a,\overline{\theta}_i-\theta_*\rangle \geq \sqrt{2\|a\|^2_{V_i^{-1}}\log\left(\frac{1}{\beta}\right)} \right]\leq \beta,
	\end{equation} 
	where $\beta \in (0,1)$ and $\|a\|^2_{V_i^{-1}}=a^{\top} V_i^{-1}a$. Let $V_i(\pi_i)=\sum_{a\in\mathcal{C}_i}\pi_i(a)aa^{\top}$ and hence we have
	\begin{equation}
		V_i = \sum_{a\in\mathcal{C}_i} n_{ia} aa^{\top} \geq q^{i}\sum_{a\in\mathcal{C}_i}\pi_i(a) aa^{\top} = q^{i} V_i(\pi_i).
	\end{equation}
	Observe that for any symmetric random variable $x$ if $\Pr [x\geq t]\leq \beta$, then $\Pr [|x|\geq t]= \Pr [x\geq t] + \Pr [- x\geq t] \leq  2\beta$.
	Thus, from lemma~\ref{lem:core}, we have $\|a\|^2_{V_i^{-1}}= \frac{1}{q^{i}} a^{\top} V_i(\pi_i)^{-1}a\leq\frac{2d}{q^{i}}$ for each $a\in\mathcal{A}_i$. By setting $\beta=\frac{1}{4KT^{2}}$ and $\|a\|^2_{V_i^{-1}}\leq\frac{2d}{q^{i}}$ for each $a\in\mathcal{A}_i$ in~\eqref{eqn:first_bound}, we get that:
	\begin{equation}
		\Pr\left[\left|\langle a,\bar{\theta}_i-\theta_*\rangle  \right| \geq \sqrt{\frac{4d}{q^{i}}\log\left(4KT^2\right)} \right]\leq \frac{1}{2KT^2},
	\end{equation}
	for each $a\in\mathcal{A}_i$. Now, we compute the effect of the privacy in estimating $\theta_{*}$ by bounding difference $\langle a, \Bar{\theta}_i-\hat{\theta}_i\rangle$. Observe that $\hat{r}_{ia}=\Bar{r}_{ia}+z_{ia}$, where $z_{ia}\sim \mathsf{Lap}(\frac{1}{\epsilon})$, and hence, we can write $\hat{\theta}_i - \Bar{\theta}_i =V_i^{-1}\sum_{a\in\mathcal{C}_i} z_{ia}a$. Thus, for any $\alpha\in\mathcal{A}_i$, we have that:
	\begin{equation}
		\langle \alpha,\hat{\theta}_i - \Bar{\theta}_i\rangle =\sum_{a\in\mathcal{C}_i}\alpha^\top V_i^{-1}a z_{ia},
	\end{equation}  
	where $\alpha^{\top} V_i^{-1}a\leq \max_{b\in\mathcal{A}_i} \|b\|^2_{V_i^{-1}}\leq \frac{2d}{q^{i}}$ for each $a\in \mathcal{C}_i$ that holds from the fact that $V_i$ is positive semi-definite. From Lemma~\ref{lemm:laplace} presented at the end of the section, by setting $b= \epsilon$, $n=Bd$, $c=\frac{2d}{q^{i}}$, and $t=2\frac{Bd^2}{\epsilon q^{i}}+\frac{2d\log\left(4KT^2\right)}{\epsilon q^{i}}$, we get that:
	\begin{equation}
		\Pr\left[\left|\langle a,\Bar{\theta}_i-\hat{\theta}_i\rangle\right| \geq 2\frac{Bd^2}{\epsilon q^{i}}+\frac{2d\log\left(4KT^2\right)}{\epsilon q^{i}} \right]\leq \frac{1}{2KT^2},
	\end{equation}
	
	Then, by the union bound and triangle inequality we have that
	\begin{equation}
		\Pr\left[\left|\langle a,\hat{\theta}_i-\theta_*\rangle\right|> \gamma_i\ \forall i\in [\log T]\forall a\in \mathcal{A}_i \right]\leq \frac{1}{T},
	\end{equation}
	where $\gamma_i= \sqrt{\frac{4d}{q^{i}}\log\left(4KT^2\right)}+\frac{2Bd^2+2d\log\left(4KT^2\right)}{\epsilon q^{i}}$. This concludes the proof of Lemma~\ref{lem:concent}.
\end{proof}

\begin{lemma}~\label{lemm:laplace} Let $x_i=l_i z_i$ for $i\in[n]$, where $z_i\sim\mathsf{Lap}(1/b)$ and $l_i$ is constant such that $|l_i|\leq c$. Let $\Bar{x}=\sum_{i=1}^{n} x_i$. We have that
	\begin{equation}
		\begin{aligned}
			\Pr[\Bar{x}\geq t]\leq \left\{ \begin{array}{ll}
				\exp\left(-\frac{t^2b^2}{2nc^2}\right)&\text{if }\ t\leq \frac{nc}{b}\\
				\exp\left(\frac{n}{2}-\frac{b}{c}t\right)&\text{if }\ t> \frac{nc}{b}
			\end{array} 
			\right.
		\end{aligned}
	\end{equation}
\end{lemma}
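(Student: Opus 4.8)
The plan is a standard Chernoff (exponential-moment) argument, adapted to the fact that the increments are Laplace — hence heavy tailed — so that the free parameter $\lambda$ cannot be taken arbitrarily large. Concretely, since the $z_i$ are independent, so are the $x_i=l_iz_i$, and for any $\lambda\in(0,b/c)$ Markov's inequality applied to $e^{\lambda\bar x}$ gives
\[
\Pr[\bar x\ge t]\le e^{-\lambda t}\prod_{i=1}^{n}\mathbb{E}\!\left[e^{\lambda x_i}\right].
\]
Recall that a centered Laplace variable with scale $1/b$ has moment generating function $\mathbb{E}[e^{\mu z_i}]=(1-\mu^2/b^2)^{-1}$ for $|\mu|<b$, so $\mathbb{E}[e^{\lambda x_i}]=(1-\lambda^2 l_i^2/b^2)^{-1}$, which is finite exactly because $|\lambda l_i|\le\lambda c<b$. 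Using $|l_i|\le c$ and the elementary inequality $(1-u)^{-1}\le e^{2u}$ valid for $u\in[0,1/2]$, for every $\lambda\le b/(\sqrt2\,c)$ we get the uniform bound $\mathbb{E}[e^{\lambda x_i}]\le\exp(2\lambda^2 c^2/b^2)$, and therefore $\Pr[\bar x\ge t]\le\exp\!\left(-\lambda t+2n\lambda^2 c^2/b^2\right)$.

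It then remains to optimize over the admissible range of $\lambda$, which is where the two cases come from. The unconstrained minimizer of the exponent $-\lambda t+2n\lambda^2c^2/b^2$ is of order $tb^2/(nc^2)$; this choice is admissible precisely when $t$ is at most a constant multiple of $nc/b$, and substituting it (after tightening the elementary bound on $(1-u)^{-1}$ in the resulting small-$u$ regime) yields the Gaussian-type estimate $\exp(-t^2b^2/(2nc^2))$, which is the first case. When instead $t>nc/b$ the quadratic term no longer controls the behaviour and the best admissible step is to push $\lambda$ to just below the pole of the moment generating function, i.e.\ $\lambda$ of order $b/c$; bounding each factor $\mathbb{E}[e^{\lambda x_i}]$ by a constant (again via $|l_i|\le c$) converts the estimate into the linear-exponent bound $\exp(n/2-(b/c)t)$, the second case. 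Since $\bar x$ is a sum of symmetric independent variables, the same two bounds hold for $\Pr[\bar x\le -t]$, although only the upper tail is invoked in Lemma~\ref{lem:concent}.

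The main obstacle is not conceptual but is the careful bookkeeping around the pole $|\lambda l_i|=b$ of the Laplace moment generating function: unlike the sub-Gaussian case one must (i) check that the near-optimal $\lambda$ stays strictly inside the domain for every $i$ simultaneously, using $|l_i|\le c$, and (ii) treat the complementary range $t>nc/b$ by a boundary choice of $\lambda$ rather than by the quadratic optimizer. Pinning down the precise constants in each regime is a matter of selecting the right elementary inequality for $(1-u)^{-1}$ (and, for the sum, using $\prod_i(1-u_i)\ge 1-\sum_i u_i$), and involves only routine computation.
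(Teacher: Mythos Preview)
Your proposal is correct and follows essentially the same route as the paper: apply Markov's inequality to $e^{\lambda\bar x}$, bound each factor $\mathbb{E}[e^{\lambda x_i}]$ by a Gaussian-type term valid only on a bounded range of $\lambda$ (the sub-exponential property of Laplace variables), and then optimize over $\lambda$, with the unconstrained minimizer $\lambda=tb^2/(nc^2)$ giving the sub-Gaussian tail when $t\le nc/b$ and the boundary choice $\lambda=b/c$ giving the linear-exponent tail when $t>nc/b$. The only cosmetic difference is that the paper invokes the sub-exponential bound $\mathbb{E}[e^{\lambda x_i}]\le e^{\lambda^2 l_i^2/(2b^2)}$ directly as a known fact, whereas you first write down the exact Laplace MGF $(1-\lambda^2 l_i^2/b^2)^{-1}$ and then bound it; both lead to the same two-case optimization, and your remark that the elementary inequality on $(1-u)^{-1}$ must be sharpened to hit the stated constants is exactly the bookkeeping that remains.
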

\begin{proof}
	The proof follows from the concentration results of the Laplace distribution (e.g., see~{}). We have that
	\begin{equation}
		\begin{aligned}
			\Pr\left[\Bar{x}\geq t\right]&=\Pr\left[\exp\left(\lambda\Bar{x}\right)\geq e^{\lambda t}\right] \qquad \qquad & \forall\ \lambda\geq 0\\
			&\stackrel{(a)}{\leq} \frac{\mathbb{E}\left[\exp\left(\lambda\Bar{x}\right)\right]}{e^{\lambda t}}\\
			&\stackrel{(b)}{=}\frac{\prod_{i=1}^{n}\mathbb{E}\left[ e^{\lambda x_i}\right]}{e^{\lambda t}}&\\
			&\stackrel{(c)}{\leq} \frac{\prod_{i=1}^{n} e^{\lambda^{2}\frac{l_i^2}{2b^2} }}{e^{\lambda t}}\qquad\qquad &\forall\ 0\leq \lambda \leq \frac{b}{c}\\
			&\stackrel{(d)}{\leq} \frac{ e^{\lambda^{2}n\frac{c^2}{2b^2} }}{e^{\lambda t}}\qquad\qquad &\forall\ 0\leq\lambda \leq \frac{b}{c}
		\end{aligned}
	\end{equation}
	where step (a) follows from Markov's inequality and step (b) follows from the fact that $z_1,\ldots,z_n$ are independent Laplace random variables. Step (c) follows from the fact that $z_i$ is sub-exponential random variable with proxy $\frac{l_i^2}{2b^{2}}$. Step (d) follows from the fact that $|l_i|\leq c$. By choosing $\lambda =\frac{tb^2}{nc^2} $ when $t<\frac{nc}{b}$ and $\lambda = \frac{b}{c}$ when $t>\frac{nc}{b}$, we get that
	\begin{align}
		\Pr\left[\Bar{x}\geq t\right]{\leq} \left\{ \begin{array}{ll}
			\exp\left(-\frac{t^2b^2}{2nc^2}\right)&\text{if }\ t\leq \frac{nc}{b}\\
			\exp\left(\frac{n}{2}-\frac{b}{c}t\right)&\text{if }\ t> \frac{nc}{b}\\
		\end{array} 
		\right.,
	\end{align}

	This completes the proof of Lemma~\ref{lemm:laplace}. 
\end{proof}

\section{Regret and Privacy Analysis of The local DP Model (Proof of Theorem~\ref{thm:2})}~\label{sec:analysis_local}
\subsection{Privacy Analysis} The privacy proof is straightforward. For any client, since the reward is bounded by $|r|\leq 1$, the output $\hat{r}=r+\mathsf{Lap}(1/\epsilon_0)$ is $\epsilon_0$-LDP~from~\cite[Theorem~$3.6$]{Calibrating_DP06}.

\subsection{Regret Analysis} \label{subsec:local_regret}
We next prove the regret bound of Algorithm~\ref{alg:local} for stochastic linear bandits with LDP. Our proof is similar to the proofs of the central DP Algorithm presented in Section~\ref{subsec:central_regret}. 

Let $\tilde{R}_{T}$ be the regret defined in~\eqref{eq:reg_net}. Let $\mathcal{G}$ be the good event $\left\{\left|\langle a,\hat{\theta}_i-\theta_*\rangle\right|< \gamma_i\ \forall i\in [\log T]\forall a\in \mathcal{A}_i\right\}$. Lemma~\ref{lem:concent_local} shows that the event $\mathcal{G}$ holds with probability at least $1-\frac{1}{T}$. In the remaining part of the proof we condition on the event $\mathcal{G}$. When $q^i<\max\{Bd,2\log(4KT^2)\}$, the regret can be bounded by $\max\{Bd,2\log(4KT^2)\}$, and when $q^i\geq \max\{Bd,2\log(4KT^2)\}$, we bound $n_i\leq 2q^i$, and hence, $$\gamma_i\leq \sqrt{\frac{4d}{q^{i}}\log\left(4KT^2\right)}+\frac{2d}{\epsilon_0}\sqrt{\frac{\log(4KT^2)}{q^i}}\leq (1+\frac{1}{\epsilon_0})2d\sqrt{\frac{\log(4KT^2)}{q^i}}.$$ By following similar steps as in the central DP, we can show that there is universal constants $C',C$ such that the total regret in \eqref{eq:reg_net} can be bounded as
\begin{align}
	\tilde{R}_T &\leq (Bd+2\log(4KT^2))\log(T) +\sum_{i=1}^{\log T}4n_i \gamma_{i-1}\nonumber\\
	&\leq (Bd+2\log(4KT^2))\log(T) +(1+\frac{1}{\epsilon_0})2d\sum_{i=1}^{\log T} 8 q^i \sqrt{\frac{1}{q^{i-1}}\log\left(4KT^2\right)}\nonumber \\
	&\leq C'(1+\frac{1}{\epsilon_0})\left(d\sqrt{d}\log^2 (T)+ d\sqrt{d\log T}\sum_{i=1}^{\log T} q^{(i-1)/2}\right)q\nonumber \\
	&\stackrel{(a)}{\leq} C'(1+\frac{1}{\epsilon_0})q\left(d\sqrt{d}\log^2 (T)+ d\sqrt{d\log T}q^{\log T/2}\right)\nonumber \\
	&\stackrel{(b)}{\leq} C'(1+\frac{1}{\epsilon_0})q\left(d\sqrt{d}\log^2 (T)+ d\sqrt{dT\log T}\right)\nonumber \\
	&\stackrel{(c)}{\leq} C(1+\frac{1}{\epsilon_0})\left(d\sqrt{dT\log T}\right),
\end{align}
where step $(a)$ follows from the sum of a geometric series and $q>1$, step $(b)$ uses $q=(2T)^{1/\log T}$, and step $(c)$ follows from the facts $q\leq e^2,\ \log^2 T = O(\sqrt{T})$.

Hence, following similar steps as in the proof of the central DP algorithm, with probability at least $1-\frac{1}{T}$ the regret is bounded as
\begin{equation}
	R_T\leq \tilde{R}_T+1 \leq C(1+\frac{1}{\epsilon_0})\left(d\sqrt{dT\log T}\right).
\end{equation}

\begin{lemma}\label{lem:concent_local} Let $\hat{\theta}_i$ be the least square estimate of $\theta_{*}$ at the end of the $i$th batch of Algorithm~\ref{alg:local}. Then, we have that
	\begin{equation}
		\Pr\left[\left|\langle a,\hat{\theta}_i-\theta_*\rangle\right|> \gamma_i\ \forall i\in [\log T]\forall a\in \mathcal{A}_i \right]\leq \frac{1}{T},
	\end{equation}
	where $\gamma_i=\sqrt{\frac{4d}{q^{i}}\log\left(4KT^2\right)}+\frac{2d}{q^i\epsilon_0}\sqrt{n_i\log (4KT^2)}$.
\end{lemma}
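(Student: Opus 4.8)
The plan is to mirror the structure of Lemma~\ref{lem:concent} for the central model, adapting the noise bound to reflect that in the local model each individual reward is perturbed by $\mathsf{Lap}(1/\epsilon_0)$ rather than the aggregate being perturbed by $\mathsf{Lap}(1/\epsilon)$. First I would decompose the private least squares estimate at batch $i$ as $\hat\theta_i = \bar\theta_i + (\hat\theta_i - \bar\theta_i)$, where $\bar\theta_i = V_i^{-1}\sum_{a\in\mathcal{C}_i}\bar r_{ia}a$ is the non-private estimate built from the true reward sums $\bar r_{ia}=\sum_k r_{ia}^{(k)}$, and $V_i = \sum_{a\in\mathcal{C}_i} n_{ia}aa^\top$. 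For the non-private part, exactly as in the central proof, I would invoke the standard sub-Gaussian self-normalized bound (the cited Chapter~21 inequality), use $V_i \succeq q^i V_i(\pi_i)$ together with the core-set guarantee from Lemma~\ref{lem:core} to get $\|a\|^2_{V_i^{-1}}\le 2d/q^i$ for all $a\in\mathcal{A}_i$, and conclude that with probability at least $1-\tfrac{1}{2KT^2}$ (per batch, per action), $|\langle a,\bar\theta_i-\theta_*\rangle| \le \sqrt{\tfrac{4d}{q^i}\log(4KT^2)}$.

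For the privacy perturbation term, the key difference from the central case is that each reward sum $\hat r_{ia}$ now equals $\bar r_{ia} + \sum_{k=1}^{n_{ia}} z_{ia}^{(k)}$ with $n_{ia}$ i.i.d.\ $\mathsf{Lap}(1/\epsilon_0)$ summands, so there are $n_i = \sum_{a\in\mathcal{C}_i} n_{ia}$ Laplace variables in total rather than $|\mathcal{C}_i|\le Bd$. Writing $\langle \alpha, \hat\theta_i - \bar\theta_i\rangle = \sum_{a\in\mathcal{C}_i}\sum_{k=1}^{n_{ia}} (\alpha^\top V_i^{-1} a)\, z_{ia}^{(k)}$ and using $|\alpha^\top V_i^{-1} a| \le \max_{b}\|b\|^2_{V_i^{-1}} \le 2d/q^i$ (as in the central proof, since $V_i$ is positive semidefinite), I would apply Lemma~\ref{lemm:laplace} with $b = \epsilon_0$, $n = n_i$, $c = 2d/q^i$, and choose $t = \frac{2d}{q^i\epsilon_0}\sqrt{n_i\log(4KT^2)}$. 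I need to check this lands in the sub-Gaussian regime $t \le nc/b = 2d n_i/(q^i\epsilon_0)$, i.e.\ that $\sqrt{n_i\log(4KT^2)} \le n_i$, which holds precisely when $n_i \ge \log(4KT^2)$; in the regret proof this is exactly handled by the case split $q^i \ge \max\{Bd, 2\log(4KT^2)\}$, and for small $q^i$ the contribution is absorbed into the additive $O(\log T \cdot \log(4KT^2))$ term. In the valid regime, Lemma~\ref{lemm:laplace} gives $\Pr[|\langle a,\bar\theta_i-\hat\theta_i\rangle| \ge t] \le 2\exp(-t^2 b^2/(2 n_i c^2)) = 2\exp(-\log(4KT^2)/2)\cdot(\text{const})$; I may need to adjust the constant in front of $t$ slightly (the stated $\gamma_i$ uses coefficient $2d$) to get the bound down to $\tfrac{1}{2KT^2}$, but this is a routine constant chase.

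Finally I would combine the two pieces by the triangle inequality, giving that for each fixed $i$ and $a\in\mathcal{A}_i$, $|\langle a,\hat\theta_i-\theta_*\rangle| \le \gamma_i$ with probability at least $1 - \tfrac{1}{KT^2}$, where $\gamma_i = \sqrt{\tfrac{4d}{q^i}\log(4KT^2)} + \tfrac{2d}{q^i\epsilon_0}\sqrt{n_i\log(4KT^2)}$. A union bound over the at most $\log T$ batches and at most $K = |\mathcal{N}_{1/T}| = (3T)^d$ actions in each $\mathcal{A}_i \subseteq \mathcal{A}_0$ — using $K\log T \cdot \tfrac{1}{KT^2} = \tfrac{\log T}{T^2} \le \tfrac{1}{T}$ — yields the claimed uniform bound. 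The main obstacle, such as it is, is bookkeeping around the regime condition for Lemma~\ref{lemm:laplace}: one must be careful that the Laplace tail is used in its sub-Gaussian branch, which is only legitimate once $n_i$ is large enough relative to $\log(4KT^2)$, and to confirm that the small-batch cases contribute only lower-order (logarithmic) regret so that the final bound $R_T \le C(1+\tfrac{1}{\epsilon_0})\, d\sqrt{dT\log T}$ goes through; the probabilistic content itself is a direct adaptation of the central-model argument with $Bd$ replaced by $n_i$ in the Laplace concentration.
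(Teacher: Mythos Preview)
Your proposal is correct and follows essentially the same approach as the paper: the same decomposition $\hat\theta_i=\bar\theta_i+(\hat\theta_i-\bar\theta_i)$, the same non-private bound via the core-set inequality $\|a\|_{V_i^{-1}}^2\le 2d/q^i$, and the same application of Lemma~\ref{lemm:laplace} with parameters $b=\epsilon_0$, $n=n_i$, $c=2d/q^i$, $t=\tfrac{2d}{q^i\epsilon_0}\sqrt{n_i\log(4KT^2)}$, followed by a union bound. If anything, you are slightly more careful than the paper in flagging the regime condition $n_i\ge \log(4KT^2)$ needed for the sub-Gaussian branch of Lemma~\ref{lemm:laplace} and the minor constant discrepancy in the resulting tail probability.
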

\begin{proof}
	Let $\hat{\theta}_i= V_i^{-1}\sum_{a\in \mathcal{C}_i}\hat{r}_{ia}a$ be the private estimate of $\theta_{*}$ and $\overline{\theta}_i= V_i^{-1}\sum_{a\in \mathcal{C}_i}\overline{r}_{ia}a$ be the non-private estimate of $\theta_*$ as $\lbrace\overline{r}_{ia}\rbrace$ are the non-private rewards, where $V_i=\sum_{a\in \mathcal{C}_i}n_{ia}aa^\top$ and $\hat{r}_{ia}=\sum_{j=1}^{n_{ia}}\hat{r}_{ia}^{(j)}$. Similar to the central DP in Section~\ref{sec:central}, we have that
	\begin{equation}
		\Pr\left[\left|\langle a,\bar{\theta}_i-\theta_*\rangle  \right| \geq \sqrt{\frac{4d}{q^{i}}\log\left(4KT^2\right)} \right]\leq \frac{1}{2KT^2},
	\end{equation}
	for each $a\in\mathcal{A}_i$. Now, we compute the effect of the LDP in estimating $\theta_{*}$ by bounding difference $\langle a, \Bar{\theta}_i-\hat{\theta}_i\rangle$. Observe that $\hat{r}_{ia}=\sum_{j=1}^{n_{ia}}\hat{r}_{ia}^{(j)}=\Bar{r}_{ia}+z_{ia}$, where $\Bar{r}_{ia}=\sum_{j=1}^{n_{ia}}r_{ia}^{(j)}$ and $z_{ia}= \sum_{j=1}^{n_{ia}}z_{ia}^{(j)}$, where $z_{ia}^{(j)}\sim \mathsf{Lap}(\frac{1}{\epsilon_0})$. Hence, we can write $\hat{\theta}_i - \Bar{\theta}_i =V_i^{-1}\sum_{a\in\mathcal{C}_i} z_{ia}a$. Thus, for any $\alpha\in\mathcal{A}_i$, we have that:
	\begin{equation}
		\langle \alpha,\hat{\theta}_i - \Bar{\theta}_i\rangle =\sum_{a\in\mathcal{C}_i}\sum_{j=1}^{n_{ia}}\alpha^\top V_i^{-1}a z_{ia}^{(j)},
	\end{equation}  
	where $\alpha^{\top} V_i^{-1}a\leq \max_{b\in\mathcal{A}_i} \|b\|^2_{V_i^{-1}}\leq \frac{2d}{q^{i}}$ for each $a\in \mathcal{C}_i$ that holds from the fact that $V_i$ is positive semi-definite. From Lemma~\ref{lemm:laplace} presented in Section~\ref{sec:central}, by setting $b= \epsilon_0$, $n=n_i$, $c=\frac{2d}{q^{i}}$, and $t=\frac{2d}{q^{i}\epsilon_0}\sqrt{n_i\log(4KT^2)}$, we get that:
	\begin{equation}
		\Pr\left[\left|\langle a,\Bar{\theta}_i-\hat{\theta}_i\rangle\right| \geq  \frac{2d}{q^{i}\epsilon_0}\sqrt{n_i\log(4KT^2)} \right]\leq \frac{1}{2KT^2},
	\end{equation}
	Then, by the union bound and triangle inequality we have that
	\begin{equation}
		\Pr\left[\left|\langle a,\hat{\theta}_i-\theta_*\rangle\right|> \gamma_i\ \forall i\in [\log T]\forall a\in \mathcal{A}_i \right]\leq \frac{1}{T},
	\end{equation}
	where $\gamma_i= \sqrt{\frac{4d}{q^{i}}\log\left(4KT^2\right)}+\frac{2d}{q^i\epsilon_0}\sqrt{n_i\log (4KT^2)}$. This concludes the proof of Lemma~\ref{lem:concent_local}.
\end{proof}

\section{Regret and Privacy Analysis of The Shuffled Model (Proof of Theorem~\ref{thm:3})}~\label{sec:analysis_shuffled}

\subsection{Privacy Analysis}
We note that the data of each user $j$ can be represented as $\cup_{a\in \mathcal{C}_i}\{(a,r^{(j)}_a)\}$. We observe that our scheme is equivalent to performing the following steps
\begin{itemize}
	\item Each user $j\in [n_i]$ sends its data $\mathcal{D}_j=\cup_{a\in \mathcal{C}_i}\{(a,r^{(j)}_a)\}$ to the shuffler.
	\item The shuffler randomly permutes the sets $\mathcal{D}_1,...,\mathcal{D}_{n_i}$ to get $\mathcal{D}_{\pi(1)},...,\mathcal{D}_{\pi(n_i)}$.
	\item The shuffler reveals $n_i$ action reward pairs $(a_1,\hat{r}_{ia_1}),...,(a_{n_i},\hat{r}_{ia_{n_i}})$, where $(a_j,\hat{r}_{ia_j})\in \mathcal{D}_{\pi(j)}$, and $\hat{r}_{ia_j}$ is the LDP version of ${r}_{ia_j}$ ($\hat{r}_{ia_j}={r}_{ia_j}+\mathsf{Lap}(\frac{1}{\epsilon_0^{(i)}})$).
\end{itemize}
Hence, we shuffle the data, then feed it to an LDP mechanism with LDP parameter $\epsilon_0^{(i)}$ (as proved in Theorem~\ref{thm:2}). As a result, it follows from \cite{feldman2022hiding} that the output of the shuffler is $(\epsilon_i,\delta)$-DP where 
\begin{equation}
	\epsilon_i = \log\left(1+\frac{e^{\epsilon_0^{(i)}}-1}{e^{\epsilon_0^{(i)}}+1}\left(\frac{8\sqrt{e^{\epsilon_0^{(i)}}\log(4/\delta)}}{\sqrt{n_i}}+\frac{8e^{\epsilon_0^{(i)}}}{n_i}\right)\right).
\end{equation}
By the choice of $\epsilon_0^{(i)}$ as an inverse of the function $f_{n_i,\delta}$, we have that $\epsilon_i=\epsilon$ for all $i\in [\log T]$.

We observe that for any neighboring datasets $D,D'$, there is only one user data that is different between $D,D'$. That user appears in exactly one batch. It follows that Algorithm~\ref{alg:shuffle} is $(\epsilon,\delta)$-DP.

\subsection{Regret Analysis} 
We next prove the regret bound of Algorithm~\ref{alg:shuffle} for stochastic linear bandits in the shuffled model. Our proof is similar to the proofs of the LDP Algorithm presented in Section~\ref{subsec:local_regret}. 

Let $\tilde{R}_{T}$ be the regret defined in~\eqref{eq:reg_net}. Let $\mathcal{G}$ be the good event $\left\{\left|\langle a,\hat{\theta}_i-\theta_*\rangle\right|< \gamma_i\ \forall i\in [\log T]\forall a\in \mathcal{A}_i\right\}$. Lemma~\ref{lem:concent_local} shows that the event $\mathcal{G}$ holds with probability at least $1-\frac{1}{T}$. In the remaining part of the proof we condition on the event $\mathcal{G}$. When $q^i<Bd$, the regret can be bounded by $Bd$. By following similar steps as in the central DP, we can show that there is universal constants $C'$ such that the total regret in \eqref{eq:reg_net} can be bounded as
\begin{align}
	\tilde{R}_T &\leq Bd\log(T) +\sum_{i=1}^{\log T}4n_i \gamma_{i-1}\nonumber\\
	&\stackrel{(a)}{\leq} Bd\log(T)+\sum_{i=1}^{\log T} 8 q^i \sqrt{\frac{4d}{q^{i-1}}\log\left(4KT^2\right)}+C'\frac{2d}{\epsilon}\sum_{i=1}^{\log T} 8 q \sqrt{\log\left(4KT^2\right)\log(1/\delta)}\nonumber \\
	&\stackrel{}{\leq} C\left(d\sqrt{T\log T}+\frac{(d\log T)^{3/2}\sqrt{\log (1/\delta)}}{\epsilon}\right),
\end{align}
where step $(a)$ follows from the fact that from the privacy analysis, when $\epsilon_0^{(i)}\leq 1$, we get that $\epsilon = O(\epsilon_0^{(i)}\sqrt{\frac{\log(1/\delta)}{n_i}})$.

Hence, following similar steps as in the proof of the central DP algorithm, with probability at least $1-\frac{1}{T}$ the regret is bounded as
\begin{equation}
	R_T\leq \tilde{R}_T+1 \leq C\left(d\sqrt{T\log T}+\frac{(d\log T)^{3/2}\sqrt{\log (1/\delta)}}{\epsilon}\right).
\end{equation}

\section{Additional Numerical Results}~\label{app:numerics}
\begin{figure*}[t!]
	\begin{subfigure}{0.49\linewidth}
		\centerline{\includegraphics[width=0.99\linewidth, scale=0.5]{regret}}
		\caption{Central, local and shuffled models, $K=10, T=10^6$.}
		~\label{fig:regret_app}
	\end{subfigure}
	\begin{subfigure}{0.49\linewidth}
		\centerline{\includegraphics[width=0.99\linewidth, scale=0.5]{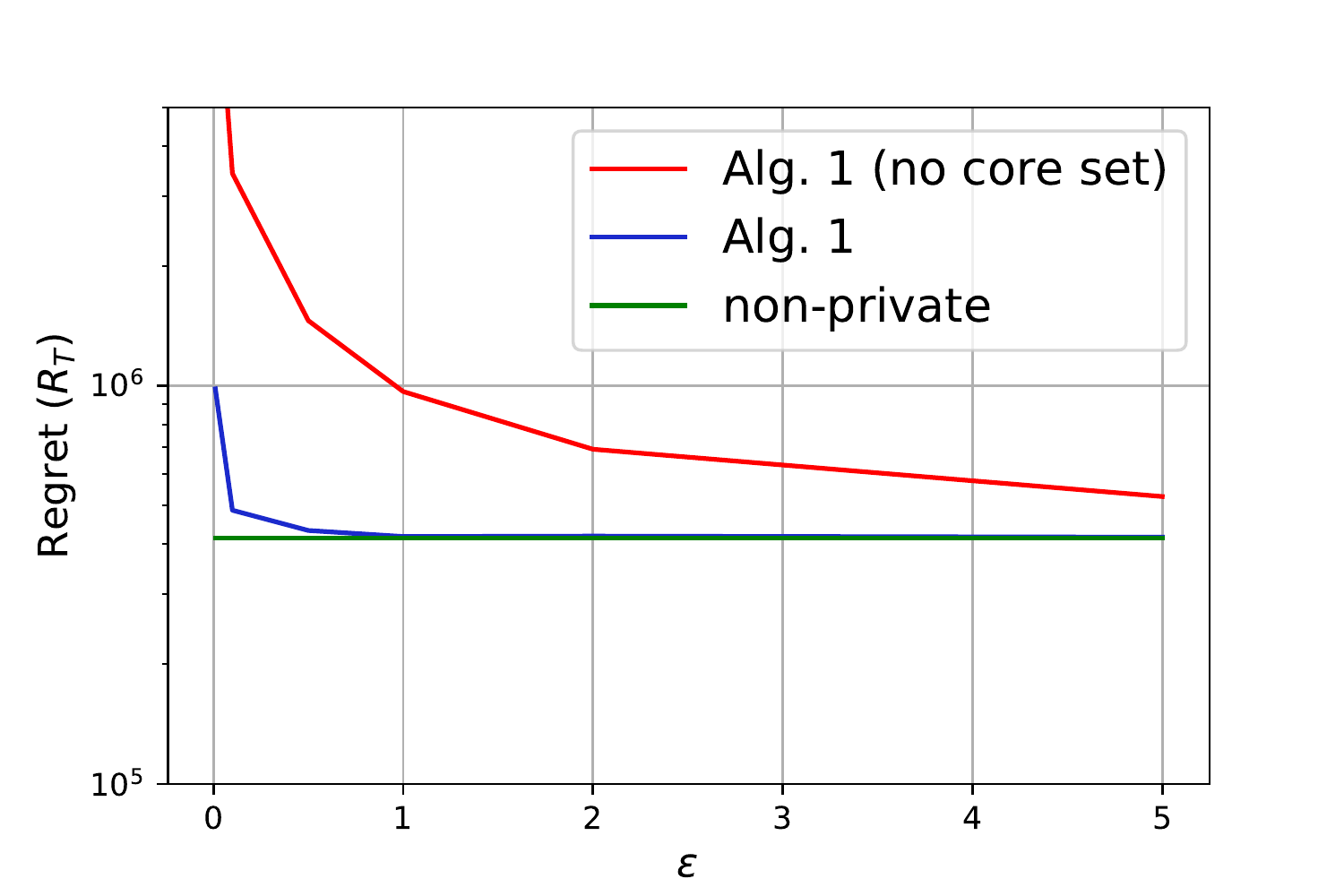}}
		\caption{Effect of core set size, $K=1000,T=10^7$.}
		~\label{fig:core_app}
	\end{subfigure}
	\caption{Regret-privacy trade-offs  for stochastic linear bandits algorithms.}\label{fig:num_app}
\end{figure*}

{\bf Data Generation.} We generate synthetic data generated as follows. The set of actions $\mathcal{A}$ contains $K$ actions, where each action $a\in\mathcal{A}$ is a $d=2$-dimensional vector. The actions $a\in\mathcal{A}$ and the optimal parameter $\theta_{*}$ are generated uniformly at random from the unit sphere $\mathcal{S}^{d-1}=\{x\in \mathbb{R}^d: ||x||_2= 1\}$. 
Figure~\ref{fig:regret_bar} plots the total regret $R_T$ over an horizon $T=10^{6}$ as a function of the privacy budget ($\epsilon$ or $\epsilon_0$ in case of LDP mechanisms). Figure~\ref{fig:num_app}  plots the total regret $R_T$ over an horizon $T$ as a function of the privacy budget ($\epsilon$ or $\epsilon_0$ in case of LDP mechanisms).

{\bf Usefulness of Core Set.} In Figure~\ref{fig:core_app}, we explore potential benefits on the performance of Algorithm~\ref{alg:cen} that use of the core set can offer. We consider   $K=1000$ and $T=10^7$, and plot the 
regret of Algorithm~\ref{alg:cen} for two cases:  (i) when we use a core set 
of size $2$-$3$ actions, similar to the dimension of our space  (labeled as Alg. 1), and (ii) when no core set is used, and instead the good set of actions of the batched algorithm is the whole action set (labeled as Alg. 1 no-core-set). We find that,  as expected from our theoretical analysis, using a core set enables to achieve performance very close to that of a non-private batched algorithm that adds no noise. In contrast, using (and adding noise to) the entire action space significantly degrades the performance.

\begin{figure}[t]
	\centering
	\includegraphics[scale=0.8]{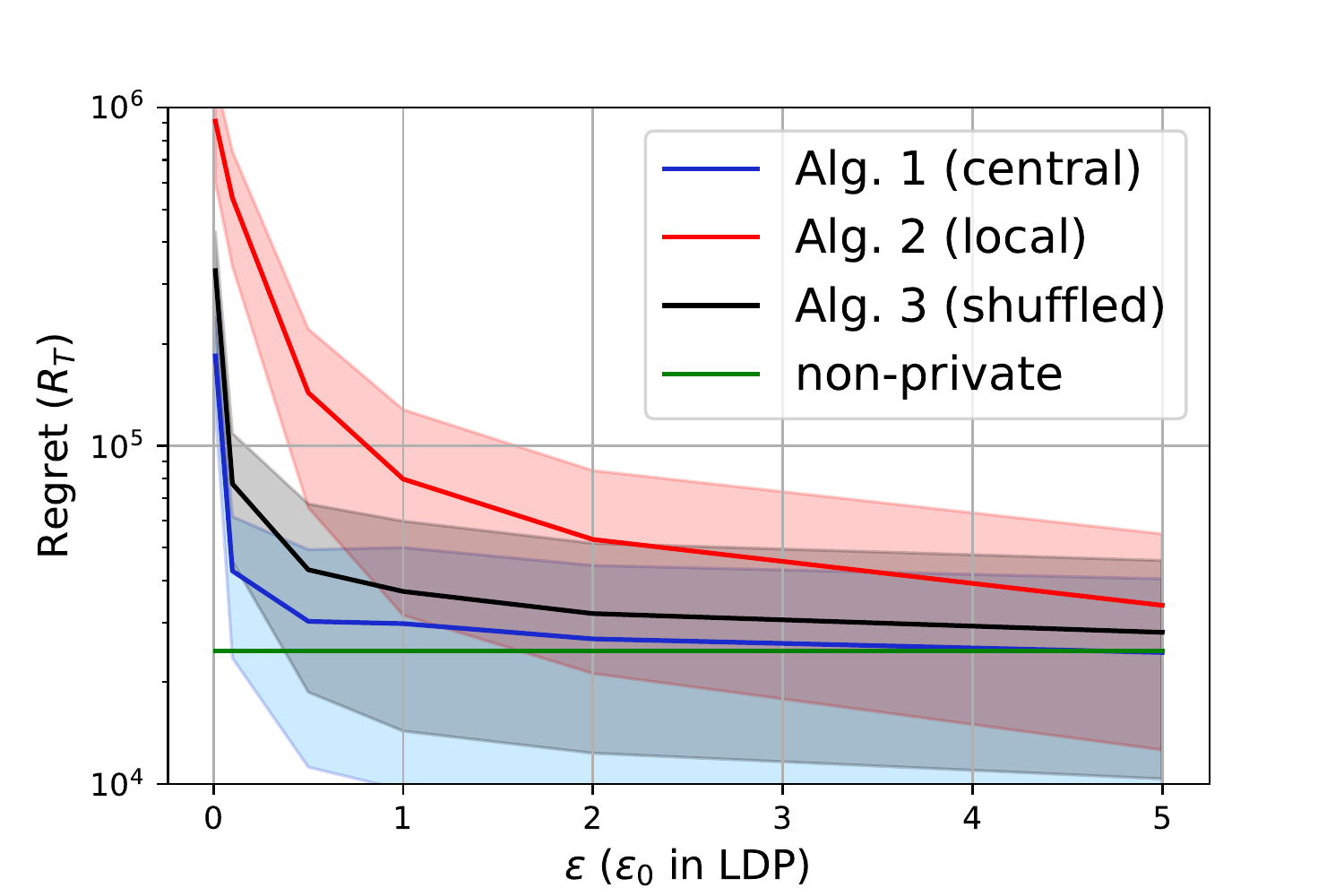}
	\captionof{figure}{Regret-privacy trade-offs for stochastic linear bandits algorithms with $T=10^6$.}
	\label{fig:regret_bar}
\end{figure}
We here present indicative results on the performance of our proposed Algorithms~\ref{alg:cen},~\ref{alg:local} and~\ref{alg:shuffle}.

{\bf Comparison of Algorithms 1, 2 and 3.} In Figure~\ref{fig:regret_app}, we compare the regret of the proposed algorithms in the central, local and shuffled models using $K=10, T=10^6$. We observe that all algorithms converge to the regret of non-private stochastic linear bandit algorithms~\cite{lattimore2020bandit} as $\epsilon\to\infty$ ($\epsilon_0\to \infty$), albeit at different rates. As predicted from the theoretical analysis, Algorithms~\ref{alg:cen} (central) and~\ref{alg:shuffle} (shuffled) offer privacy (almost) for free, closely following the non-private regret. 

}

\newpage
\bibliographystyle{IEEEtran}
\bibliography{BandPrivRefs}

\end{document}